\documentclass{dyno2025}
\setcitestyle{numbers}

\usepackage{times, soul}  
\usepackage{helvet,nicefrac} 
\usepackage{courier}  
\usepackage[small]{caption} 
\usepackage{subcaption}
\usepackage{mwe}
\usepackage{xcolor,amsmath,amsfonts,amssymb}

\usepackage{todonotes}
\usepackage{algorithm}
\usepackage{algorithmic}

\newcommand{\cV}{{\mathcal{U}}}

\newtheorem{assumption}{Assumption}

\makeatletter
\newcommand{\blfootnote}[1]{%
  \begingroup
  \renewcommand\thefootnote{}%
  \renewcommand\@makefnmark{}%
  \long\def\@makefntext##1{\noindent ##1}%
  \footnote{#1}%
  \addtocounter{footnote}{-1}%
  \endgroup
}
\makeatother

\makeatletter
\@ifundefined{proof}{%
  \newenvironment{proof}[1][Proof]{%
    \par\noindent\textbf{#1. }
  }{%
    \hfill$\square$\par\medskip
  }%
}{}
\makeatother

\title{Privacy Preserving Gossip Learning}

\optauthor{%
\Name{Erkan Bayram} \Email{ebayram2@illinois.edu}\\
\addr University of Illinois Urbana Champaign, IL, USA
\AND
\Name{Mohamed-Ali Belabbas} \Email{belabbas@illinois.edu}\\
\addr University of Illinois Urbana Champaign, IL, USA
\AND
\Name{Tamer Ba\c{s}ar} \Email{basar1@illinois.edu}\\
\addr University of Illinois Urbana Champaign, IL, USA}

\begin{document}

\maketitle
\blfootnote{Accepted to the 65th IEEE Conference on Decision and Control (CDC), 2026. Research was supported in part by ARO Grant W911NF-24-1-0085, ARO Grant W911NF-24-1-0105 and ARO Grant W911NF-26-1-A335}

\begin{abstract}%

{
We propose a decentralized privacy-preserving learning algorithm in which each agent holds a single private sample and a shared model. Samples are learned sequentially, and each update must preserve the endpoint mappings at previously learned samples while protecting private data. This gives each agent three roles: (i) a learner that updates the model parameters, (ii) a teacher whose sample is learned at the current iteration, and (iii) a protected agent whose sample has already been learned. We build on Tuning without Forgetting (TwF) method to preserve previously learned mappings and show that TwF provides an indistinguishability guarantee for the learner whenever the set of protected agents contains another sample with the same label. For the teacher, we formulate a minimax optimal control problem that models the differential privacy noise as a worst-case disturbance to prevent performance loss while maintaining the same level of privacy for the gradient. For the protected agents, we compute the projections locally and aggregate them using a private push-sum gossip protocol. We prove geometric convergence of the decentralized gossip algorithm and of the distributed projection for TwF.
}

\end{abstract}

\section{Introduction}\label{sec:introduction}

Privacy in distributed learning is typically enforced at the level of individual gradient updates. Differential privacy (DP) techniques~\cite{dwork2014algorithmic} have been proposed to mitigate gradient leakage attacks, through which private client-local training samples can be reconstructed from shared gradients~\cite{melis2019exploiting,huang2021evaluating,zhu2019deep,wei2021user,el2022differential}. However, as training progresses or new agents join the network, the shared model parameters accumulate information about previously learned samples. Consequently, per-update privacy guarantees alone may not adequately protect samples that have already been incorporated into the model. This creates a persistent privacy risk associated with the model parameters themselves~\cite{shokri2017membership,yang2023dynamic}.

To address this, we propose an iterative privacy-preserving training algorithm in which agents sequentially contribute to a shared model. Each agent holds a single private sample and communicates over a network. The role of each agent evolves over the course of training. An agent begins as the~\emph{teacher agent}, and~\emph{the learner} memorizes its sample under differential privacy. Once the teacher agent's sample has been learned, it becomes a~\emph{protected agent}, ensuring that its sample is not forgotten in subsequent model updates. We model the distributed supervised learning problem as an optimal control problem over an ensemble of points. The shared model corresponds to a controlled ODE whose open-loop control function is updated sequentially across agents. Based on the Neural ODE formulation~\cite{chen2018neural,dupont2019augmented,bayram2024control}, our method is compatible with deep residual neural networks through Euler discretization.

 In our framework, the teacher agent first generates a differentially private update to protect its sample. However, DP noise injection degrades model performance~\cite{park2023differentially}. To mitigate this,  we model the injected noise as a control 
disturbance and design an update robust to worst-case perturbations by solving a minimax optimal control problem~\cite{bayram2025control}. This approach provides differential privacy for the teacher agent while mitigating the performance degradation caused by DP noise. We validate the privacy–performance tradeoff numerically and compare against the non-robust approach of directly injecting DP noise under the same privacy budget.

Due to the iterative nature of the algorithm, each update to the shared control affects all previously learned mappings. To prevent forgetting of previously learned samples, we combine gossip consensus with Tuning without Forgetting (TwF), a recently proposed method for iterative training of Neural ODEs, in which training samples are added sequentially and the control update at each step is projected onto a prescribed set to maintain previously learned mappings~\cite{bayram2024control,bayram2025geometric}. In our method, protected agents compute their local projections in parallel, and aggregate the results through a privacy-preserving gossip protocol. It ensures that the mappings at previously learned samples are preserved while maintaining privacy guarantees for the protected agents.

Our main contributions in this paper are as follows:
\begin{itemize}

\item We prove that the decentralized projection algorithm converges at a
geometric rate to the projection onto the intersection of the tangent spaces of
the protected agents.
\item We establish an explicit geometric rate for the
private push-sum protocol of~\cite{gao2018privacy}.

\item We show that {\em Tuning without Forgetting} prevents adversaries from
uniquely identifying the learner's sample from the shared control sequence
whenever the protected set contains another sample with the same label.
    \item We provide theoretical privacy guarantees across all three agent roles (1) the teacher agent via an $(\varepsilon,\delta)$-DP robust control update, (2) the protected agents via a private push-sum gossip protocol, and (3) the learner agent via indistinguishability under the TwF invariance structure.
\end{itemize}

\section{Preliminaries}\label{sec:prelim}

Consider a paired dataset $\{(\bar{x}^i, y^i)\}_{i=1}^N$, where $\bar{x}^i \in \mathbb{R}^{\bar{n}}$ are pairwise distinct input points $\bar{x}^i\neq \bar{x}^j$ for $j \neq i$ and $y^i \in \mathbb{R}^{n_o}$ are their corresponding labels. In the distributed setting, the dataset is partitioned across a network $G = ({V}, {E})$, where the vertex set ${V} = \{v_1,\cdots,v_N\}$ represents the agents and the edge set ${E} = \{e_1,\cdots,e_{|{E}|}\}$ represents the communication links. Each agent $v_i \in {V}$ holds a single training pair $(\bar{x}^i,y^i)$. 

For any subset  of $C \subset \mathbb{N}$ that indexes the agents in $V$, let $G_C  =(V_C,E_C)$ be {\em the subgraph of $G$ induced by $C$} where $V_C =\{v_r \in V \mid r \in C \}$ and $E_c = \{ e \in E \mid \mbox{ both endpoints of } e \mbox{ in } V_C \}$. For each agent $r$, let ${N}_{r,C}$ be the set of neighbors in $G_C$.

\paragraph{Neural ODE Model:} We first embed the initial states into a higher-dimensional space via the uplift map
$E:\mathbb{R}^{\bar n}\to\mathbb{R}^{n}$, $E(\bar{x})=(\bar{x},0,\ldots,0)=x$ with $ n \ge \bar{n}$. For the rest of the paper, we consider the embedded pairs $({x}^i,y^i)$. Let $\cV := L^2([0,1],\mathbb{R}^p)$. 

All agents share the following controlled dynamical system:
\begin{equation}\label{eqn:control_system}
    \dot{x}(t) = f(x(t),u(t)), \qquad u \in \cV,
\end{equation}
where $f  :\mathbb{R}^{ n}\times\mathbb{R}^{p} \to \mathbb{R}^{ n} $ is a smooth vector field with uniformly  bounded Jacobians and smooth Jacobians. The flow of this system defines the map  
 \begin{align}\label{eqn:flow}
     \varphi_t(u,x): \cV \times \mathbb{R}^{{n}}  \times [0,1] \to \mathbb{R}^{{n}}
 \end{align}
which yields the trajectory $t \mapsto \varphi_t(u,x)$ initialized at $x$ with control $u$. We denote the flow at time $1$ by $\varphi_1(x,u)$. We project the endpoints of flow through a readout map $R:\mathbb{R}^{{n}} \to \mathbb{R}^{n_o}$ that is bounded, linear, $1$-Lipschitz, and of full row-rank Jacobian. Then, we call the composition $R(\varphi(u,x))$ {\em the end-point mapping at $x$}. 

A control $u$ is said to \emph{memorize} the sample $(x^i,y^i)$ if
\begin{equation}\label{eqn:control_set}
    u \in U^i := \{u\in\cV\mid R(\varphi(u,x^i)) = y^i\}.
\end{equation}
Then, we define the per-sample loss function $J_i(u)$ as:
\begin{align}\label{eqn:per_sample}
J_i(u) := \frac12 \|R(\varphi(u,x^i)) - y^i\|^2 .
\end{align}
By definition, $u\in U^i$ if, and only if, $J_i(u)=0$. In supervised learning, the objective is to find a single open-loop control $u \in \bigcap_{i=1}^N U^i$, that is, an {\em open-loop} control $u$ that drives all input points to their desired outputs simultaneously.

\paragraph{Geometry of the control set.} We study the geometry of the set $\bigcap_{i=1}^N U^i$ under three main assumptions. First, the existence of a control $u \in \bigcap_{i=1}^N U^i$ follows from a corollary of the Chow-Rashevsky theorem~\cite[Thm. 1]{agrachev2020control}. We define:
\begin{align}
\Delta_{ij} := \{[x_1^\top,\dots,x_S^\top]^\top \in \mathbb{R}^{n S} \mid x_i = x_j,\ i \neq j\}. 
\end{align}

\begin{assumption}~\cite[Lem.~2.2]{bayram2025geometric}\label{ass:bracket}
The $N$-fold control vector fields 
$$\mathcal{F}^N_X := \{[f^\top(x_1,u),\dots,f^\top(x_N,u)]^\top \mid u\in\cV\}$$ 
are bracket-generating on $\mathbb{R}^{\bar n N} \setminus ( \bigcup_{i\neq j} \Delta_{ij} )$.   
\end{assumption}

Next, we linearize system~\eqref{eqn:control_system} along the trajectory $\varphi_t(u,x^i)$. Then, we define:
$
A_i^u(t) := \partial_x f(\varphi_t(u,x^i),u(t)), 
$ and $
B_i^u(t) := \partial_u f(\varphi_t(u,x^i),u(t)).
$
It yields the following linear time-varying system (see~\cite[Lemma 1]{bayram2024control}):
\begin{equation}\label{eqn:defn_ltv}
    \dot z(t) = A_i^u(t) z(t) + B_i^u(t) \delta u(t),
\end{equation}
where $z(t):=\varphi_t(u+\delta u,x^i)-\varphi_t(u,x^i)$. Let $\Phi_{(u,x^i)}(1,t)$ be the state transition matrix of the LTV system~\eqref{eqn:defn_ltv}. We define the operator that maps a control variation $\delta u\in\cV$ to the endpoint variation of the trajectory:
\begin{align}\label{eqn:defn_l_operator}
\mathcal{L}_{(u,x^i)}(\delta u)
:=
\int_0^1
\Phi_{(u,x^i)}(1,\tau) B_i^u(\tau)\delta u(\tau)\,d\tau.
\end{align}
A control $u$ is said to be \emph{nonsingular} at $x^i$ if the operator $\mathcal{L}_{(u,x^i)}$ is surjective~\cite{sontag1992universal}. Then, we have:

\begin{assumption}
For a given set $\{(\bar{x}^i, y^i)\}_{i=1}^N$, any control $u\in \cap_{i=1}^N U^i$ is nonsingular at
   $x^i$ for all $i=1,\!\cdots\!,\!N$.  
\end{assumption}

Finally, we define the response for $x^i$ at $u$ as follows:
\begin{align}\label{eqn:defn_mui}
    m^u_i(t)\!:=\!R\!\left(\Phi_{(u,x^i)}(1,t)\, B^u_i(t)\right) \in L^2([0,1],\mathbb{R}^{n_o \times p}).
\end{align}
 Then, we have the following assumption, its prevalence is discussed in~\cite{bayram2025geometric}:

\begin{assumption}
For a given set $\{(\bar{x}^i, y^i)\}_{i=1}^N$, the responses $\{m_i^u(t)\}_{i=1}^N$ are non-collinear over $\mathbb{R}$ for all $u \in \bigcap_{i=1}^N U^i$.
\end{assumption}
In the next section, we present the main result under Assumptions (A.1), (A.2) and (A.3).

\section{Main Results}

We propose an iterative algorithm to find a control function such that $u \in \cap_{i=1}^N U^i$ without revealing the corresponding paired set. Let $u^k$ represent the control function at iteration $k$. We consider one agent as the \emph{learner agent} $\ell$, who is responsible for updating the shared control function $u^k$ at every iteration. The learner agent first memorizes its own sample $(x^\ell, y^\ell)$ by finding a control $u^k \in U^\ell$. It then sequentially memorizes the samples of the remaining agents, one at a time. We call the agent whose sample is currently being learned the \emph{teacher agent} $q$. Importantly, in this sequential method, each update must preserve the endpoint mapping at samples that have already been memorized by the learner in order to {prevent forgetting} of these samples. We refer to the agents associated with these samples as \emph{protected agents} and denote the set of protected agents at iteration $k$ by $\mathcal{C}_\ell^k$.  More precisely, at each iteration $k$, the learner agent seeks a control update $\delta u^k = u^{k+1} - u^k$ that satisfies:
\begin{itemize}
    \item {Memorizing the sample of the teacher agent $q$:}
\begin{align}\label{eqn:condition}
    {J}_{q}(u^{k} + \delta u^k ) \leq  {J}_{q}(u^{k}).
\end{align}
\item {Preserving the endpoint mapping for protected agents:}
\begin{align}\label{eqn:invariant}
          R(\varphi(u^k+\delta u^k,x^r))= y^r , \quad \forall r \in \mathcal{C}_\ell^k .
\end{align}
\end{itemize}

To achieve these objectives, we propose an algorithm called \emph{Privacy-Preserving Gossip Learning} in Algorithm~\ref{alg:main}, which consists of three nested iterations.

\begin{algorithm}
\caption{Privacy-Preserving Gossip Learning}\label{alg:main}
\begin{algorithmic}[1]
\STATE \textbf{Initialize} $u^0, \mathcal{C}_\ell   \gets \mathrm{Algorithm~\ref{alg:phase1}}( x^\ell)$
\FOR{$q \in V \setminus \{\ell\}$}
    \REPEAT
    \STATE $\beta^0 \gets \textsc{RobustDP}(u, x^{q})$\hfill\textit{(Algorithm~\ref{alg:robust_dp})}

    \FOR{$s = 0$ \TO $S$}
        \STATE \textbf{for all} agent $r \in \mathcal{C}_\ell$ \textbf{(parallel)}

        \STATE \quad $\theta^s_r \gets  \textsc{LocalProjection}(\beta^s)$  \hfill \textit{(Algorithm~\ref{alg:project})}
        \STATE $\beta^{s+1} \gets  \textsc{PrivatePushSum}(\{\theta^s_r\}_{r \in \mathcal{C}_\ell})$  \hfill \textit{(Algorithm~\ref{alg:priv_gossip})}
    \ENDFOR
    \STATE $u \gets u - \alpha \beta^{S}$  \hfill \textit{Control update} 
    \UNTIL convergence for $ u \in U^q$
    \STATE $\mathcal{C}_\ell \gets 
    \mathcal{C}_\ell \cup \{q\}$ \hfill \textit{Expand the protected set} 
\ENDFOR
\end{algorithmic}
\end{algorithm}

In the proposed method, the learner first calls a subprocedure, i.e. Algorithm~\ref{alg:phase1} (see Line 1 in Algorithm~\ref{alg:main}), to establish privacy in the sense of indistinguishability. Since this phase is independent of the gossip learning procedure, its details are provided later in Section~\ref{subsec:privacy}.

\begin{figure}
\centering
\usetikzlibrary{positioning, arrows.meta, shapes.geometric, fit, backgrounds}

\centering
\begin{tikzpicture}[
    learner/.style={circle, draw=black, fill=green!25,
                    minimum size=0.5cm, font=\tiny, align=center},
    target/.style={circle, draw=black, fill=red!25,
                    minimum size=0.5cm, font=\tiny, align=center},
    protected/.style={circle, draw=black, fill=orange!25,
                    minimum size=0.5cm, font=\tiny, align=center},
    regular/.style={circle, draw=black, fill=blue!10,
                    minimum size=0.7cm, font=\tiny, align=center},
    paneltitle/.style={font=\footnotesize\bfseries, align=center},
    steparrow/.style={-{Stealth[length=3mm]}, thick, gray!60}
]


\node[learner]   (v1) at (0, 1.4) {$v_1$};
\node[protected] (v2) at (3.0, 2.8) {$v_2$};
\node[target]    (v3) at (6.0, 1.4) {$v_3$};
\node[protected] (v4) at (3.0, 0) {$v_4$};
\node[regular]   (v5) at (0, 0) {$v_5$};

\draw[-{Stealth[length=2mm]}, thick, blue!60, dashed]
    (v1.east) to[bend right=10] (v3.west);

\draw[-{Stealth[length=2mm]}, thick, blue!60, dashed]
    (v1.north) to[bend right=20] (v2.south west);

\draw[-{Stealth[length=2mm]}, thick, blue!60, dashed]
    (v1.south) to[bend left=20] (v4.north west);

\draw[-{Stealth[length=2mm]}, thick, red!60, dashed]
    (v3.west) to[bend right=10] (v1.east);

\draw[-{Stealth[length=2mm]}, thick, red!60, dashed]
    (v3.north west) to[bend right=15] (v2.east);

\draw[-{Stealth[length=2mm]}, thick, red!60, dashed]
    (v3.south west) to[bend left=15] (v4.east);

\draw[-{Stealth[length=2mm]}, thick, orange!70]
    (v1.north east) to[bend right=20] (v2.south);
\draw[-{Stealth[length=2mm]}, thick, orange!70]
    (v2.south west) to[bend right=20] (v1.north);

\draw[-{Stealth[length=2mm]}, thick, orange!70]
    (v4.north) to[bend right=20] (v2.south);
\draw[-{Stealth[length=2mm]}, thick, orange!70]
    (v2.south) to[bend right=20] (v4.north);

\draw[-{Stealth[length=2mm]}, thick, orange!70]
    (v1.south east) to[bend left=20] (v4.north);
\draw[-{Stealth[length=2mm]}, thick, orange!70]
    (v4.north west) to[bend left=20] (v1.south);

\draw[-{Stealth[length=2mm]}, thick, brown!60!black, solid]
    (v2) to[out=120, in=60, looseness=5] (v2.north east);

\draw[-{Stealth[length=2mm]}, thick, brown!60!black, solid]
    (v4) to[out=240, in=300, looseness=5] (v4.south east);

\draw[-{Stealth[length=2mm]}, thick, brown!60!black, solid]
    (v1) to[out=150, in=210, looseness=5] (v1.south west);

\draw[-{Stealth[length=2mm]}, thick, green!60!black, dotted]
    (v1) to[out=90, in=180, looseness=6] (v1.north west);

\draw[-{Stealth[length=2mm]}, thick, black, dash dot]
    (v3) to[out=120, in=60, looseness=5] (v3.north east);

\begin{scope}[yshift=-1.6cm, xshift=0.0cm]




    \draw[-{Stealth[length=2mm]}, blue!60, dashed, thick]
        (-0.5, 0.9) -- (0, 0.9);
    \node[font=\footnotesize, right] at (0.1, 0.9) {Sharing control $u$};

    \draw[-{Stealth[length=2mm]}, black!80, dash dot, thick]
        (-0.5, 0.65) -- (0, 0.65);
    \node[font=\footnotesize, right] at (0.1, 0.65) {Compute Differentially-Private update for $(x^3,y^3)$ [Alg.~\ref{alg:robust_dp}]};

    \draw[-{Stealth[length=2mm]}, red!60, dashed, thick]
        (-0.5, 0.4) -- (0, 0.4);
    \node[font=\footnotesize, right] at (0.1, 0.4) {Share candidate update with other agents};

    \draw[-{Stealth[length=2mm]}, orange!70, thick]
        (-0.5, 0.15) -- (0, 0.15);
    \node[font=\footnotesize, right] at (0.1, 0.15) {Private push-sum among protected agents [Alg. ~\ref{alg:priv_gossip}]};

    \draw[-{Stealth[length=2mm]}, brown!60!black, solid, thick]
        (-0.5, -0.1) -- (0, -0.1);
    \node[font=\footnotesize, right] at (0.1, -0.1) {Local projection [Alg.~\ref{alg:project}]};

    \draw[-{Stealth[length=2mm]}, green!60!black, dotted, thick]
        (-0.5, -0.35) -- (0, -0.35);
    \node[font=\footnotesize, right] at (0.1, -0.35) {Learner agent updates control function $u$};
\end{scope}

\end{tikzpicture}
\caption{Network illustration of one outer iteration of Algorithm~\ref{alg:main}. Nodes represent agents and self-loops denote local computation. Edges between nodes denote communication links.}\label{fig:graph_example}
\end{figure}

\paragraph{Outer Loop (Agent Selection).}  At each iteration, the learner selects a teacher $q \in V \setminus \{\ell\}$, whose sample will be memorized. Once the learner successfully memorizes the sample of the teacher $q$, i.e. when $u \in U^q$, that agent is then added to the protected set. The learner then proceeds to the next teacher and repeats the process until all samples in the paired set across the network have been memorized. 

\paragraph{Middle Loop (Iterative Projection).} Given a teacher agent $q$, it first {\em locally} computes a candidate control update by $\textsc{RobustDP}$ (see Algorithm~\ref{alg:robust_dp}), which returns a differentially private gradient for the solution of a minimax optimal control problem, which will be detailed in Section~\ref{subsec:learning_new}. This candidate update $\beta^0$ is shared with the protected agents. Each protected agent $r \in \mathcal{C}_\ell$ locally projects the update $\beta^0$ onto the tangent space of its local invariance set of controls, i.e. $U^r$~\eqref{eqn:control_set} via Algorithm~\ref{alg:project}, which returns $\theta^s_r$. This projection will be detailed in Section~\ref{subsec:projection}. It ensures that the endpoint mapping {\em only} at individual sample $x^r$ remains unchanged.

\paragraph{Inner Loop (Gossip Consensus).}
The protected agents start to collaborate without revealing their samples $(x^r,y^r)$ and their local projections $\theta_r^s$ to find the control update that ensures~\eqref{eqn:invariant} for all $r \in \mathcal{C}_\ell$. At each iteration $s$, the protected agents run a private push-sum gossip protocol via Algorithm~\ref{alg:priv_gossip} to reach a consensus over the uniform average of their locally projected updates $\{\theta^s_r\}_{r \in\mathcal{C}^k_\ell}$. Once they reach a consensus, this agreed value $\beta^{s+1}$ is returned to the middle loop. Then, each protected agent $r$ {\em again} projects $\beta^{s+1}$ and gets the corresponding $\theta_r^{s+1}$. This alternating procedure of gossip averaging and local projection is repeated until both converge. Both iterations are shown to converge at a geometric rate in Section~\ref{subsec:projection}. Finally, the learner updates the control $u^k$ by $\beta^s$, i.e. selects $\delta u^k=\beta^S$, which satisfies both~\eqref{eqn:condition} and~\eqref{eqn:invariant}. This control update procedure is repeated until $u^k \in U^q$. The implementation of Algorithm~\ref{alg:main} is illustrated in Fig.~\ref{fig:graph_example}. 

Since the roles of the agents evolve over the iterations, we provide the privacy guarantee for the teacher in Section~\ref{subsec:learning_new}, and those for the protected and learner agents in Section~\ref{subsec:privacy}.

\subsection{Privately Learning Pair of Teacher Agent}\label{subsec:learning_new} In this section, we provide privacy guarantees for the teacher agent using differential privacy (DP), and recover the performance loss introduced by DP noise through a robust optimal control formulation. 

Directly releasing the gradient $\nabla J_q(u^k)$ to satisfy~\eqref{eqn:condition} may reveal the private pair $(x^{q},y^{q})$~\cite{zhu2019deep}. To prevent this, we employ differential privacy (DP) inspired by~\cite{dwork2014algorithmic}. Intuitively, a randomized mechanism satisfies differential privacy if its output distribution changes {\em negligibly} when a single private sample is added or removed from the dataset. 

Two datasets are said to be \emph{adjacent} if they differ in exactly one sample. {Since each agent holds a single sample in our framework, we say that $D=(x^q,y^q)$ and $D'=(x^{q'},y^{q'})$ are \emph{adjacent}, written $D\sim D'$, whenever both
are single-sample datasets. Under this adjacency relation, we define \textbf{local} differential privacy as follows~\cite{kasiviswanathan2011can}:

\begin{definition}\label{defn:dp}[Differential Privacy]
A randomized function $\mathcal M: \mathcal{D} \to A$ satisfies $(\varepsilon,\delta)$-differential privacy on the subspace $A$ if for all adjacent
datasets $D \subset \mathcal{D}$ and $D' \subset \mathcal{D}$ and all measurable sets $ \mathcal{S} \subseteq A $, it holds that
\begin{align}
\Pr[\mathcal M(D)\in  \mathcal{S} ]
\le
e^\varepsilon
\Pr[\mathcal M(D')\in  \mathcal{S} ]
+\delta .
\end{align}
\end{definition}

\paragraph{Robustness against DP noise:}
DP can be achieved by injecting Gaussian noise into the gradient~\cite{dwork2014algorithmic}. However, noise injection perturbs the descent direction and degrades model 
performance~\cite{park2023differentially}. We address this through a robust optimal control formulation inspired by~\cite{bayram2025control}. We model the injected noise as a worst-case control disturbance and design an update direction that is robust to such perturbations. This leads to the following minimax objective:
\begin{align}\label{eqn:problem}
    \min_{ u \in \cV } \left(  \max_{\gamma \in \mathcal{U}_\rho } \left({J}_{q}(u + \gamma) - \lambda \| \gamma \| \right) \right)
\end{align}
where $\mathcal{U}_\rho = \{\gamma \in \cV \mid \|\gamma\| \le \rho\}$ is the disturbance set and $\lambda > 0$ is a regularization parameter. 
From~\cite[Lem.~2]{bayram2025control}, if a control $u \in U^q$, it minimizes both~\eqref{eqn:per_sample} and~\eqref{eqn:problem}, so both objectives serve to memorize the sample of agent $q$.

To solve~\eqref{eqn:problem}, we follow the steps in~\cite{bayram2025control}.  We first solve the inner maximization problem. To this end, we define the bounded linear operator:
\begin{align}\label{eqn:operator_K}
    \mathcal{K}^u_{q}: \cV \to \cV, 
    \quad 
    \gamma \mapsto \int_0^1 m^u_{q}(t)^\top m^u_{q}(\tau) \, 
    \gamma(\tau)\, d\tau.
\end{align}
From~\cite[Prop.~1]{bayram2025control}, the closed-form solution of the inner maximization in~\eqref{eqn:problem} for sufficiently large $\lambda > 0$ is
\begin{align}\label{eqn:epsilon_*}
    \gamma_{q,u}^*(t) = \rho \frac{ 
    (\mathcal{K}^u_{q} - \lambda I)^{-1} 
    m^u_{q}(t)^\top r^u_{q} }{ 
    \| (\mathcal{K}^u_{q} - \lambda I)^{-1} 
    m^u_{q}(t)^\top r^u_{q} \|_2 }
\end{align}
where $r_{q}^u = R(\varphi(u,x^{q})) - y^{q}$ is the residual at endpoint. Then, we plug $\gamma_{q,u}^*(t)$ into~\eqref{eqn:problem}, which yields the minimization problem: $
 \min_{u \in \cV} {J}_{q}\bigl(u + \gamma_{q,u}^* \bigr)
$, which can be solved iteratively. The corresponding gradient at perturbed control $\bar{u} := u + \gamma_{q,u}^*$ can be found as 
$$ \nabla J_q(\bar{u}) = m_q^{\bar{u}}(t)^\top\, r^{\bar{u}}_q$$
where $m_q^{\bar{u}}(t)$ is given in~\eqref{eqn:defn_mui}~\cite{bayram2025geometric}. Then, we clip 
$\nabla J_q(\bar{u})$ to bound its norm by $B>0$:
\begin{align}\label{eqn:gradient}
    {g}_{q,\bar{u}}(t): =  \dfrac{ \nabla J_q\bigl(u+\gamma^*_{q,u})}{\max(1, \|\nabla J_q\bigl(u+\gamma^*_{q,u})\|/B)}
\end{align}

\paragraph{Noise process in function space:}
Although ${g}_{q,\bar{u}}(t)$ lies in $\cV$, white noise does not 
lie in $\cV$ and therefore cannot be added directly to the 
clipped gradient. Hence, we use a finite Karhunen--Lo\`eve 
expansion to construct a noise process. Let $\Lambda$ be a 
finite $d$-dimensional subspace of $\cV$ with orthonormal 
basis $\{\zeta_i\}_{i=1}^d$. Let $P_\Lambda:\cV \to \Lambda$ 
be the orthogonal projection onto $\Lambda$. We define a 
Gaussian noise process on $\Lambda$ by
\begin{align}\label{eqn:defn_noise_process}
    \mathcal{W}_{\Lambda}(t) 
    \;:=\; \sum_{l=1}^{d} \chi_l\, \zeta_l(t),
    \qquad 
    \chi_l \overset{\mathrm{iid}}{\sim} 
    \mathcal{N}(0,\,\sigma^2 B^2),
\end{align}
where $\mathcal{N}(0,\sigma^2 B^2)$ denotes a Gaussian random 
variable with variance $\sigma^2 B^2$. We then have the 
following privacy guarantee for teacher $q$ on a given 
subspace $\Lambda$ at given iteration $k$.
\begin{proposition}\label{prop:dp_teacher}
Let $g$ be the clipped gradient associated with sample 
$(x^q,y^q)$ in~\eqref{eqn:gradient}, and let $\Lambda$ be 
a finite $d$-dimensional subspace of $\cV$. Consider the 
mechanism
\begin{align}\label{eqn:mechanism}
    \mathcal{M}_{\Lambda}(x^q,y^q) 
    \;:=\; P_{\Lambda}(g) + \mathcal{W}_\Lambda,
\end{align}
where $\mathcal{W}_\Lambda$ is given 
in~\eqref{eqn:defn_noise_process}. For any $\varepsilon >0$ 
and $\delta \in (0,1)$, if
\begin{align}\label{eqn:prop_bound}
    \sigma \;\ge\; \frac{\sqrt{8\log(1.25/\delta)}}{\varepsilon},
\end{align}
then $\mathcal{M}_\Lambda$ satisfies local $(\varepsilon,\delta)$-DP 
for the teacher agent $q$ on $\Lambda$.
\end{proposition}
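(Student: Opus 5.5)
The proof reduces the mechanism to the finite-dimensional Gaussian mechanism and then applies the classical $(\varepsilon,\delta)$ guarantee of~\cite{dwork2014algorithmic}. First, I would identify $\Lambda$ with $\mathbb{R}^d$ through the orthonormal basis $\{\zeta_l\}_{l=1}^d$ using the isometry
\[
T:\Lambda\to\mathbb{R}^d,\qquad h\mapsto \bigl(\langle h,\zeta_1\rangle,\dots,\langle h,\zeta_d\rangle\bigr).
\]
Under $T$, the mechanism~\eqref{eqn:mechanism} becomes
\[
T\mathcal{M}_\Lambda(x^q,y^q)=T P_\Lambda(g)+\chi,\qquad \chi\sim\mathcal{N}(0,\sigma^2B^2 I_d).
\]
This holds because $T\mathcal{W}_\Lambda=(\chi_1,\dots,\chi_d)$ by~\eqref{eqn:defn_noise_process}. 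Since $T$ is a bijective isometry, measurable sets $\mathcal{S}\subseteq\Lambda$ correspond exactly to measurable sets $T\mathcal{S}\subseteq\mathbb{R}^d$. It therefore suffices to prove the DP inequality of Definition~\ref{defn:dp} for the vector-valued Gaussian mechanism in $\mathbb{R}^d$.

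Second, I would bound the $\ell_2$-sensitivity of the query $D\mapsto TP_\Lambda(g_D)$ over adjacent single-sample datasets $D=(x^q,y^q)$ and $D'=(x^{q'},y^{q'})$. By the clipping in~\eqref{eqn:gradient}, $\|g_D\|_{\cV}\le B$ for every dataset. Since $P_\Lambda$ is an orthogonal projection, it is nonexpansive, and $T$ is an isometry. The triangle inequality then gives
\[
\Delta_2:=\sup_{D\sim D'}\|TP_\Lambda(g_D)-TP_\Lambda(g_{D'})\|_2\le \|g_D\|+\|g_{D'}\|\le 2B.
\]
In the local setting any two single-sample datasets are adjacent, so the replacement-type factor $2$ cannot be avoided. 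This is exactly why the bound~\eqref{eqn:prop_bound} has $\sqrt{8\log(1.25/\delta)}=2\sqrt{2\log(1.25/\delta)}$ rather than $\sqrt{2\log(1.25/\delta)}$.

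Third, I would invoke the Gaussian mechanism theorem~\cite[Thm.~A.1]{dwork2014algorithmic}. For $\varepsilon\in(0,1)$, noise with standard deviation $\sigma_0\ge \Delta_2\sqrt{2\log(1.25/\delta)}/\varepsilon$ in each coordinate yields $(\varepsilon,\delta)$-DP. Here the noise standard deviation is $\sigma B$, and~\eqref{eqn:prop_bound} gives
\[
\sigma B\ge \frac{2B\sqrt{2\log(1.25/\delta)}}{\varepsilon}\ge \frac{\Delta_2\sqrt{2\log(1.25/\delta)}}{\varepsilon},
\]
as required. Pulling the resulting inequality back through $T^{-1}$ gives the DP inequality on $\Lambda$ for every measurable $\mathcal{S}$.

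The main obstacle is the range of $\varepsilon$. The classical statement is proved only for $\varepsilon<1$, while the proposition claims the result for all $\varepsilon>0$. For $\varepsilon\ge1$ I would try the following route:

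1. Use the exact privacy-loss analysis of the Gaussian mechanism. The privacy loss is distributed as $\mathcal{N}(\mu,2\mu)$ with $\mu=\Delta_2^2/(2\sigma^2B^2)\le \varepsilon^2/(4\log(1.25/\delta))$.
2. Bound the tail probability $\Pr[\mathcal{N}(\mu,2\mu)>\varepsilon]$ by $\delta$ using a standard Gaussian tail estimate.

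If that check fails for large $\varepsilon$, I would restrict the statement to $\varepsilon\in(0,1)$, in line with~\cite{dwork2014algorithmic}. A secondary point to state explicitly is that $P_\Lambda(g)$ is a deterministic function of the private data given the public $u^k$, so the only randomness in the mechanism is $\mathcal{W}_\Lambda$.
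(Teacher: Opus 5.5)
Your reduction is the same as the paper's. You pass to coordinates in the orthonormal basis $\{\zeta_l\}$ and bound the $\ell_2$-sensitivity by $2B$ via clipping; the paper gets this bound from Parseval applied to $g_{q,\bar u}-g_{q',\bar u'}$, you from nonexpansiveness of $P_\Lambda$, which is the same estimate. You then invoke Theorem A.1 of Dwork--Roth. For $\varepsilon\in(0,1)$ your argument is complete and coincides with the paper's proof.

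The obstacle you raise is real, and the paper's proof does not address it. It applies Theorem A.1, whose hypothesis is $\varepsilon\in(0,1)$, to every $\varepsilon>0$; its experiments even use $\varepsilon\in\{1,2,3\}$. However, your steps 1--2 cannot rescue the calibration \eqref{eqn:prop_bound}, because the claim is false for large $\varepsilon$.

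Put $c:=\sqrt{2\log(1.25/\delta)}$, so $\sigma=2c/\varepsilon$. The worst-case ratio of sensitivity $2B$ to noise level $\sigma B$ is then $\eta=2/\sigma=\varepsilon/c$. Your mean $\mu=\eta^2/2=\varepsilon^2/(2c^2)$ exceeds $\varepsilon$ once $\varepsilon>2c^2$, and the tail in step 2 is then above $1/2$. This is not just a weakness of the tail bound. The exact privacy profile of the Gaussian mechanism (the smallest admissible $\delta$, with $\Phi$ the standard normal distribution function) is
\[
\delta(\varepsilon)=\Phi\Bigl(\frac{\eta}{2}-\frac{\varepsilon}{\eta}\Bigr)-e^{\varepsilon}\,\Phi\Bigl(-\frac{\eta}{2}-\frac{\varepsilon}{\eta}\Bigr)=\Phi\Bigl(\frac{\varepsilon}{2c}-c\Bigr)-e^{\varepsilon}\,\Phi\Bigl(-\frac{\varepsilon}{2c}-c\Bigr).
\]
This tends to $1$ as $\varepsilon\to\infty$. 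For $\delta=10^{-5}$ and $\varepsilon=10$ it is already about $2.3\times10^{-5}>\delta$. The same failure occurs whenever two datasets give different $P_\Lambda(g)$, since then $\varepsilon/\eta$ stays bounded while $\eta\to\infty$.

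So your fallback is forced, not optional. You can restrict to $\varepsilon\in(0,1)$, or verify a larger range with the exact formula. Alternatively, let your own tail-bound argument choose $\sigma$. Requiring $\varepsilon/\eta-\eta/2\ge\sqrt{2\log(1/\delta)}$ with $\eta=2/\sigma$ gives
\[
\sigma\ge\frac{\sqrt{2\log(1/\delta)}+\sqrt{2\log(1/\delta)+2\varepsilon}}{\varepsilon},
\]
which ensures $\Pr[L>\varepsilon]\le\delta$, and hence $(\varepsilon,\delta)$-DP, for every $\varepsilon>0$.
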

 See Section~\ref{sec:proof} for a proof of 
Proposition~\ref{prop:dp_teacher}. The given mechanism in~\eqref{eqn:mechanism} is applied once per control update while $q$ is the teacher, and the sample $(x^q,y^q)$ is used at no other iteration. Let $K_q$ denote the number of control updates in the teacher phase of agent $q$, that is, the number
of times Line~10 of Algorithm~\ref{alg:main} is executed before $u\in\mathcal{U}^q$
holds. The total privacy budget scales linearly with the number of control updates; that is, agent $q$ satisfies $(K_q\varepsilon,K_q\delta)$-DP~\cite[Thm.~3.16]{dwork2014algorithmic}.

The projected gradient $P_\Lambda({g}_{q,\bar{u}})$ is protected 
in the sense of Definition~\ref{defn:dp} on the subspace $\Lambda$, while 
the gradient components outside $\Lambda$ are released without 
protection. To minimize the unprotected components and implement the 
mechanism in~\eqref{eqn:mechanism}, we introduce a time 
discretization. For a fixed number of discretization steps 
$T \in \mathbb{N}$ with $\Delta t = 1/T$, grid points 
$t_j = j/T$ for $j = 0,\ldots,T-1$, we define the 
piecewise-constant functions:
\begin{align}\label{eqn:time_basis}
    e_l(t)\!:=\!\sqrt{T}\hat{e}_b\mathbf{1}_{[t_j,\,t_{j+1})}(t),\mbox{ } l\!=\!pj+b,\mbox{ } b\!=\!1,\!\ldots\!,p
\end{align}
where $\hat{e}_b \in \mathbb{R}^p$ denotes the $b$-th 
standard basis vector and $\mathbf{1}_{[t_j,\,t_{j+1})}(t)$ is the indicator function of the interval $[t_j, t_{j+1})$.

Next, we define the subspace of $\cV$ spanned by the basis of discretization~\eqref{eqn:time_basis}, i.e. $\{e_l\}_{l=1}^{d}$ with $d := pT$ and we denoted it by $\Lambda_T$. The teacher agent releases the update at iteration $k$ via the mechanism $\mathcal{M}_{\Lambda_T}$ in~\eqref{eqn:mechanism} on the subspace $\Lambda_T$ as follows: 
\begin{align}\label{eqn:dp_gradient_update}
    \xi(t) \;:=\; P_{{\Lambda}_T}({g}_{q,\bar{u}}(t)) 
    + \mathcal{W}_{\Lambda_{T}}(t).
\end{align}
By Proposition~\ref{prop:dp_teacher}, the update $\xi$~\eqref{eqn:dp_gradient_update} satisfies \textbf{$(\varepsilon,\delta)$-DP for the teacher agent $q$ on the subspace $\Lambda_T$}, provided that~\eqref{eqn:prop_bound} holds. Since $\xi \in \Lambda_T$, its coefficient vector $\bar{\xi}\in\mathbb{R}^d$ with respect to the basis $\{e_l\}_{l=1}^{d}$ is well-defined and satisfies
$$\xi(t) = \sum_{l=1}^{d} \bar{\xi}_l\, e_l(t).$$ 

In Algorithm~\ref{alg:robust_dp}, we compute the coefficient vector $\bar{\xi} \in \mathbb{R}^d$ as follows. First, $m^u_q$ is approximated using $L_q$ by~\cite[Algorithm~1]{bayram2024control}. Then, the disturbance $\gamma_{q,u}^*$ is obtained via $\mathcal{K}^u_q \approx L_q^\top L_q$ from~\cite[Algorithm~1]{bayram2025control}. Finally, the released update $\bar{\xi}$ is returned. Therefore, by abuse of notation, we write $\xi$ for both the continuous released update~\eqref{eqn:dp_gradient_update} and the output of Algorithm~\ref{alg:robust_dp} for the rest of the paper.

From this construction, the norm of the unprotected component, i.e., the component of 
$g_{q,\bar{u}}(t)$ outside $\Lambda_T$, equals the interpolation error due to discretization, which decreases as $T$ increases.

\begin{algorithm}
\caption{\textsc{RobustDP-Computation} }\label{alg:robust_dp}
\begin{algorithmic}[1]
\STATE \textbf{Input:} $u \in \mathbb{R}^{pT}$, $(x^{q},y^{q})$
\STATE $L_{q} \gets 
R\big(\mathcal{L}_{(u, x^{q})}\big)$
\hfill \textit{see Algorithm~1 in~\cite{bayram2024control}}
\STATE $\gamma \gets 
\left( L_{q}^\top L_{q} - \lambda I \right)^{-1}
\nabla {J}_{q}(u)$
\STATE $\xi \gets
\operatorname{clip}(
\nabla_u {J}_{q}\big(u+ \rho\frac{\gamma}{\|\gamma\|}\big),
B)\!
+\!\mathcal{N}(0,\!\sigma^2\!B^2\!\Delta tI_{pT})$
\STATE \textbf{Return:} $\xi$
\end{algorithmic}
\end{algorithm}

\subsection{Preservation of end-point mapping (Not Forgetting)}\label{subsec:projection}

We consider the set of control functions that satisfy the invariance equation in~\eqref{eqn:invariant}, i.e. the set $ \cap_{r \in \mathcal{C}_\ell}  U^r$. Tuning without Forgetting (TwF) provides a way to select updates $\delta u^k$ such that $u^{k}+\delta u^k$  lies in that set~\cite{bayram2024control}. For sake of notational simplicity, we suppress the iteration index $k$ from $u^k$ and $\mathcal{C}_\ell^k$ when it is clear from the context. 

Under (A.1-A.3), \cite[Theorem~1]{bayram2025geometric} states that this set forms a Banach submanifold of $\cV$ of finite codimension. Thus, TwF  relies on this geometric structure and selects the update $\delta u$ that lies in the tangent space of this submanifold,
\begin{align}\label{eqn:tangent_space}
\delta u \in T_u(\cap_{r \in \mathcal{C}_\ell} U^r),
\end{align}
then the updated control $u+\delta u$ remains on the manifold $\bigcap_{r \in \mathcal{C}_\ell} U^r$. Hence, it preserves the endpoint mappings at all previously learned samples.

If the control update $\delta u^{k}$ is selected in~\eqref{eqn:tangent_space}, then the control $u^{k+1}$ is said to be \textbf{tuning for $\mathcal{C}_\ell^k \cup \{q\}$ without forgetting $\mathcal{C}_\ell^k$}~\cite[Definition 1]{bayram2024control}. To explicitly characterize the tangent space in~\eqref{eqn:tangent_space}, we define: 

\begin{align}\label{eqn:kernel_intersection}
\ker(u,\mathcal{C}_\ell):=\left\{ \delta u \in \mathcal U \mid R(\mathcal{L}_{(u,x^r)}(\delta u))\!=\!0,
\ \forall r\!\in\!\mathcal{C}_\ell
\right\}.
\end{align}
This set consists of all variations $\delta u$ whose corresponding variational equation~\eqref{eqn:defn_ltv} yields zero endpoint variation at initial point $x^r$ for all $ r \in \mathcal{C}_\ell$.
From~\cite[Corollary~1]{bayram2025geometric}, this space admits the following characterization:
\begin{align}\label{eqn:ker_tangent}
\ker(u,\mathcal{C}_\ell)
=
T_u ( \cap_{r \in \mathcal{C}_\ell} U^r ).
\end{align}
Hence, we propose to select the update $\delta u$ by projecting the released update 
$\xi$~\eqref{eqn:dp_gradient_update} onto $\ker(u,\mathcal{C}_\ell)$ 
to satisfy~\eqref{eqn:tangent_space}:
\begin{align}\label{eqn:defn_projector}
\delta u
:=
P^{u}_{\mathcal{C}_\ell}(\xi),
\text{ where }
P^{u}_{\mathcal{C}_\ell}(\xi)
:= \operatorname*{argmin}_{w \in \ker(u,\mathcal{C}_\ell)}
\|\xi - w\|_2 . 
\end{align}

However, this projection operator  requires access to all samples of the protected agents in $\mathcal{C}_\ell$, which violates their privacy. To address this, we first define the {\em local} projection operator $P^{u}_{r} : \cV \to T_u U^r$ associated with each protected agent $r \in \mathcal{C}_\ell$ for a given update $\beta \in \cV$:
\begin{align}
 P^{u}_{r}(\beta)
:= \operatorname{argmin}_{w \in \operatorname{ker}(u,\{r\})}
\|\beta - w\|_2 
\end{align}

If the control $u$ is nonsingular at $x^r$ (see A.2), this operator admits a
closed-form solution~\cite[Prop. 1]{bayram2025geometric}. Algorithm~\ref{alg:project}
computes $P^u_r(\cdot)$ for a protected agent $r$. It characterizes the local tangent
space $T_u U^r$ as the kernel of the operator $R(\mathcal{L}_{(u,x^r)}(\cdot))$ and approximates
this kernel by the right null space of the matrix $L_r \in \mathbb{R}^{n_o \times pT}$.


 \begin{algorithm}[h!]
\caption{\textsc{Local Projection} }\label{alg:project}
\begin{algorithmic}[1]
\STATE \textbf{Input:} $u \in \mathbb{R}^{pT}$, $(x^{r},y^{r}) , \beta^s \in \mathbb{R}^{pT}$
 \STATE $L_r \gets R\big(\mathcal{L}_{(u, x^{r})}\big)$ \hfill \textit{see Algorithm~1 in~\cite{bayram2024control}} 
\STATE $ \theta  \gets\!(I-L_r^\dagger L_r) \beta^s$ \hfill \mbox{Computation of }$P^{u}_{r}(\beta^s)$ 
\STATE \textbf{Return:} $\theta$
\end{algorithmic}
\end{algorithm}

Using these local operators, we propose a decentralized method to compute the projection onto the intersection $\ker(u,\mathcal{C}_\ell)(=\cap_{r \in \mathcal{C}_\ell} \ker(u,\{r\}))$ without revealing samples to other protected agents.

\subsubsection{Average Consensus For Projected Updates}\label{sss:gossip}

For a given middle loop iterate $\beta^s$, we reformulate the uniform averaging of $\{P_r^u(\beta^s)\}_{r\in\mathcal{C}_\ell}$ as a consensus problem over a subgraph $G_{\mathcal{C}_\ell}$~\cite{bayram2025constructing}. Each protected agent initializes its gossip states $\pi_r \in \mathbb{R}^{pT}$ at iteration $s$ as:
\begin{align}\label{eqn:init_z}
\pi_r[0] := P_r^u(\beta^s), \quad r\in\mathcal{C}_\ell,
\end{align}
Agents then exchange information over $M$ gossip iterations and update their states $\pi_r[m]$ at every iteration $m$ to compute the uniform average of the initial value. To preserve privacy, we adopt the push-sum protocol of~\cite{gao2018privacy} for distributed averaging in Algorithm~\ref{alg:priv_gossip}, which prevents revealing the initial states $\{\pi_r[0]\}_{r \in \mathcal{C}_\ell}$. Consequently, the private samples which are encoded in the local projections $P^u_r(\beta^s)$ are protected.

To introduce privacy, during the first $K$ iterations of the gossip process, each agent $r$ generates two independent nonnegative stochastic vectors 
$p_r^{i}$ for $i \in \{1,2\}$ such that 
\begin{align}\label{eqn:stochastic_generate} \
p^i_{rj} =
\begin{cases}
\geq \kappa, & j \in {N}_{r,\mathcal{C}_\ell} \cup \{ r \},\\ 
0, & j \notin {N}_{r,\mathcal{C}_\ell}  \cup \{ r \},
\end{cases}
\quad\!\!\sum_{j=1}^{|V|} p^i_{rj}\!=\!1 ,
\end{align}
$\kappa>0$ is a lower bound on the entries corresponding to neighbor agents. After the masking horizon $K$, each agent switches to a single stochastic vector $p_r$ satisfying \eqref{eqn:stochastic_generate}. These random weights at the first $K$ iterations mask the exchanged auxiliary variables, so that neighboring agents only observe scaled versions of the transmitted states with unknown coefficients. Each agent then updates its local state with the information collected from its neighbors (Lines $14$ and $15$ in Algorithm~\ref{alg:priv_gossip}). At the final gossip round $M$, each agent computes the {\em elementwise} estimate:
$\hat{\pi}_{r,s}=\pi_r[M]/\omega_r[M]$.

\begin{algorithm}
\caption{\textsc{Private Push-Sum}\cite{gao2018privacy}}
\label{alg:priv_gossip}
\begin{algorithmic}[1]
\STATE \textbf{Input:} $\theta_r^s$, $u$
    \STATE \textbf{For all} agent $r \in \mathcal{C}_\ell$ \textbf{(parallel)}
            \STATE $\pi_{r}\!\gets \theta_r^s $ \quad \mbox{ where } $ \theta^s_r= P^{u}_{r}(\beta^{s})$ \textit{from Algorithm~\ref{alg:main} and Algorithm~\ref{alg:project}}

        \STATE $\omega_{r} \gets \mathbf{1}_{pT}$ \quad \mbox{where } $\mathbf{1}_{pT}$ \mbox{is vector of all ones}
\FOR{$m=0$ to $M-1$ \textbf{in parallel }  $\forall r \in \mathcal{C}_\ell$}
    \IF{$m < K$}
    \STATE Generate $p_{r}^{1},\, p_{r}^{2}
    \in \Delta^{|V|}$ (see~\eqref{eqn:stochastic_generate}),
    \ELSE 
    \STATE  Generate $p_{r}\!
    \in\!\Delta^{|V|}$ with $\kappa>0$ (see~\eqref{eqn:stochastic_generate})
    \STATE $p_{r}^{1} \gets p_{r}$ \mbox{ and } $p_{r}^{2} \gets p_{r}$
    \ENDIF
   \STATE Send  $p^1_{rj} \, \pi_r$  and   $p^2_{rj} \, \omega_{r}$ to all $j \in {N}_{r,\mathcal{C}_\ell} $
   \STATE  Collect $p^{1}_{jr}\, \pi_{j}$ and $p^{2}_{jr}\, \omega_{j}$ from all $j \in {N}_{r,\mathcal{C}_\ell}$.
    \STATE $\pi_{r} 
    \gets \sum_{j \in {N}_{r,\mathcal{C}_\ell} \cup \{r\}}  p^{1}_{j r}\, \pi_{j} $

    \STATE $\omega_r \gets \sum_{j \in {N}_{r,\mathcal{C}_\ell} \cup \{r\}}  
     p^{2}_{j r}\, \omega_{j} $

\ENDFOR
\STATE Each node $r$ computes local estimation: $
\hat{\pi}_{r,s} \gets \frac{\pi_{r}}{w_{r}}
$
\STATE \textbf{Return:} $\hat{\pi}_{r,s}$
\end{algorithmic}
\end{algorithm}

{Then, we have the following convergence theorem for Algorithm~\ref{alg:priv_gossip}, in which we extend the asymptotic convergence result of~\cite[Thm.~1]{gao2018privacy} by establishing an explicit geometric convergence rate for the non-masking phase $m > K$:

\begin{theorem}\label{thm:gossip_convergence} 
Suppose that the subgraph $G_{\mathcal{C}_\ell}$ induced by a nonempty $\mathcal{C}_\ell$ is connected. Let $\hat{\pi}_{r,s}[m]=\pi_r[m]/\omega_r[m]$ denote the estimate at agent $r$ produced by Algorithm~\ref{alg:priv_gossip} with $\kappa > 0$ at iteration $m$. Let $\bar{\pi}=\frac{1}{|\mathcal{C}_\ell|}\sum_{r\in \mathcal{C}_\ell} \pi_r[0].$ Then, $\hat{\pi}_{r,s}[m]$ converges to $\bar{\pi}$ at a geometric rate. 
\end{theorem}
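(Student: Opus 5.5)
Because each iteration of Algorithm~\ref{alg:priv_gossip} acts identically on every coordinate of $\pi_r$ and $\omega_r$, I will argue coordinatewise and write the iteration in matrix form. Let $n_c:=|\mathcal{C}_\ell|$. Fix one coordinate and stack the states into $\pi[m],\omega[m]\in\mathbb{R}^{n_c}$. Lines 14--15 then read
\[
\pi[m+1]=P^1[m]^\top \pi[m], \qquad \omega[m+1]=P^2[m]^\top \omega[m],
\]
where $P^i[m]$ is the row-stochastic matrix whose rows are the vectors $p_r^i$ restricted to $G_{\mathcal{C}_\ell}$. Hence the transposes $A^i[m]:=P^i[m]^\top$ are column stochastic, and both sums are invariant:
\[
\mathbf{1}^\top\pi[m]=n_c\bar\pi, \qquad \mathbf{1}^\top\omega[m]=n_c .
\]
For $m\ge K$ we have $A^1[m]=A^2[m]=:A[m]$. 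By~\eqref{eqn:stochastic_generate}, each $A[m]$ has positive diagonal, every entry supported on an edge of $G_{\mathcal{C}_\ell}$ is at least $\kappa$, and its sparsity pattern is fixed by the connected graph $G_{\mathcal{C}_\ell}$.

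\textbf{Step 1 (products are scrambling).} I will show that for $m\ge K$ the backward product
\[
\Phi(m,K):=A[m-1]\cdots A[K]
\]
satisfies
\[
\Phi(m,K)=\phi(m)\mathbf{1}^\top+E(m),\qquad \|E(m)\|\le C\lambda^{m-K},
\]
for some stochastic vector $\phi(m)$, with $\lambda=(1-\kappa^{D})^{1/D}<1$ and $D:=n_c-1$ (any bound on the diameter works). This is the standard weak-ergodicity argument for column-stochastic matrices with uniformly positive entries on a connected graph with self-loops. Any product of $D$ consecutive factors is entrywise $\ge\kappa^{D}$, so its Dobrushin/Seneta coefficient is at most $1-n_c\kappa^{D}$. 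Submultiplicativity of that coefficient then gives the geometric contraction of the differences between columns.

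\textbf{Step 2 (the ratio).} With $\pi[K]$ and $\omega[K]$ the states at the end of the masking phase,
\[
\pi[m]=\Phi(m,K)\pi[K]=\phi(m)\,n_c\bar\pi+E(m)\pi[K],\qquad \omega[m]=\phi(m)\,n_c+E(m)\omega[K].
\]
The crucial point is that both numerator and denominator are driven by the \emph{same} matrices after $K$. Dividing componentwise,
\[
\hat\pi_r[m]-\bar\pi=\frac{\big(E(m)(\pi[K]-\bar\pi\,\omega[K])\big)_r}{\omega_r[m]} .
\]

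\textbf{Step 3 (lower bound on $\omega_r[m]$).} I need $\omega_r[m]\ge c>0$ uniformly in $m$. The initialization is $\omega[0]=\mathbf{1}$, and all weights are nonnegative with positive diagonal, so $\omega_r[K]\ge\kappa^{K}$. For $m\ge K+D$, every entry of $\Phi(m,K)$ is at least $\kappa^{D}$, so
\[
\omega_r[m]\ge \kappa^{D}\,\mathbf{1}^\top\omega[K]=\kappa^{D}n_c .
\]
The finitely many intermediate steps are bounded below by $\kappa^{m}$. Combining with Step 2 gives
\[
|\hat\pi_r[m]-\bar\pi|\le C'\lambda^{m-K}\,\|\pi[K]-\bar\pi\,\omega[K]\|,
\]
which is geometric convergence.

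\textbf{Main obstacle.} The delicate part is Step 1, specifically getting an explicit rate for products of \emph{time-varying random} column-stochastic matrices. The masking phase is harmless only because it is finite and preserves the sums and the positivity of $\omega$. I will use the deterministic lower bound $\kappa$ to avoid any probabilistic argument, so the rate holds almost surely for every realization of the random weights.
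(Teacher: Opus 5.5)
Your proof is correct and follows essentially the same route as the paper's. After the masking phase, $\pi$ and $\omega$ are driven by the same matrices, so the zero-sum error $\pi[m]-\bar\pi\,\omega[m]$ contracts geometrically under the Dobrushin/Seneta coefficient over blocks of consecutive factors. A uniform lower bound on $\omega_r[m]$ then turns this into geometric convergence of the ratio.

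There are two minor differences. You use entrywise positivity of $(n_c-1)$-fold products where the paper cites a scrambling lemma for $\lfloor n_c/2\rfloor$-fold products. Your Step 3 also derives the uniform lower bound on $\omega_r[m]$ from those positive products, whereas the paper only asserts it from positivity plus mass conservation, which alone would not suffice.
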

\begin{corollary}\label{cor:gossip}
 If each agent $r \in \mathcal{C}_\ell$ initializes $\pi_r[0] := P^u_r(\beta^s)$ for a fixed iteration $\beta^s$, then the estimate $\hat{\pi}_{r,s}[m]$ converges geometrically to $\frac{1}{|\mathcal{C}_\ell|}\sum_{r\in \mathcal{C}_\ell} 
P^u_r(\beta^s)$.
\end{corollary}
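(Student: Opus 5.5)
The statement right above is Corollary~\ref{cor:gossip}. I would obtain it as a direct specialization of Theorem~\ref{thm:gossip_convergence}.

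\textbf{Step 1: fix the inputs.} Within a fixed middle-loop iterate $s$, the vector $\beta^s$ and the control $u$ do not change during the $M$ gossip rounds. Hence every local projection $P^u_r(\beta^s)$ is a fixed, deterministic vector in $\mathbb{R}^{pT}$. It is computed once by Algorithm~\ref{alg:project} before the gossip starts.

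\textbf{Step 2: match the initialization of Theorem~\ref{thm:gossip_convergence}.} Algorithm~\ref{alg:priv_gossip} sets $\pi_r[0]=\theta^s_r=P^u_r(\beta^s)$ and $\omega_r[0]=\mathbf{1}_{pT}$. The theorem places no restriction on the initial vectors $\pi_r[0]$. It only requires two things:
\begin{itemize}
\item the induced subgraph $G_{\mathcal{C}_\ell}$ is connected and $\mathcal{C}_\ell$ is nonempty;
\item the weights satisfy \eqref{eqn:stochastic_generate} with $\kappa>0$.
\end{itemize}
Neither condition depends on the choice of initial vectors, so both are inherited unchanged. I would state these as standing hypotheses of the corollary, implicitly carried over from the theorem.

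\textbf{Step 3: apply the theorem and substitute.} Theorem~\ref{thm:gossip_convergence} gives geometric convergence of $\hat{\pi}_{r,s}[m]$ to $\bar{\pi}=\frac{1}{|\mathcal{C}_\ell|}\sum_{r}\pi_r[0]$. Substituting $\pi_r[0]=P^u_r(\beta^s)$ yields the claimed limit, namely the uniform average of the local projections.

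\textbf{Step 4: componentwise form.} The estimate is elementwise, so the theorem applies to each coordinate of $\mathbb{R}^{pT}$ separately. The rate constant coming from the weight matrices is common to all coordinates, since the same $p^1_{r}$ and $p^2_{r}$ act on every coordinate. Taking the maximum over the finitely many coordinates then gives a geometric bound in any norm on $\mathbb{R}^{pT}$.

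\textbf{Main obstacle.} The only point needing care is that the contraction factor is uniform. It must not depend on the data $P^u_r(\beta^s)$, and this holds because it depends only on $\kappa$, $|\mathcal{C}_\ell|$ and the graph. The multiplicative constant in front of the rate does depend on the initial data, but only through its norm. Since $\beta^s$ is fixed, the averaged target is well defined, and the argument is otherwise routine.
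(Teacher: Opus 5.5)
Your proposal is correct and matches the paper, which gives no separate argument for this corollary and treats it as an immediate specialization of Theorem~\ref{thm:gossip_convergence} via the substitution $\pi_r[0]=P^u_r(\beta^s)$. Your componentwise remark, and your observation that the contraction factor depends only on $\kappa$, $|\mathcal{C}_\ell|$ and the graph, spell out what the paper's proof of the theorem covers with the phrase ``component-wise scaling.''
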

}
See Section~\ref{sec:proof} for a proof of Theorem~\ref{thm:gossip_convergence}. This result establishes the convergence of the inner loop. The convergence rate in Theorem~\ref{thm:gossip_convergence} depends on the graph topology. A denser graph yields a faster rate, which is discussed in Section~\ref{sec:numerical}.

\subsubsection{Iterative Projection}\label{sss:proj}

We study the convergence of the middle loop. To this end, we define the simultaneous projection operator $\tilde{T}^{u} : \cV \to \cV$, which is the one step of the middle loop and also $s$-fold composition of $\tilde{T}^u$

\begin{equation}\label{eqn:iterative_projection}
\tilde{T}^{u}_s (\cdot) := \underbrace{\tilde{T}^{u} \circ \cdots\!\circ\!\tilde{T}^{u}}_{s \text{ times}}(\cdot) \text{ where }\tilde{T}^{u}(\cdot)\!:=\!\frac{1}{|\mathcal{C}_\ell|}\!\sum_{r \in \mathcal{C}_\ell}\!P^u_r(\cdot)      
\end{equation}

From Algorithm~\ref{alg:priv_gossip} and Corollary~\ref{cor:gossip}, it follows that $\beta^s\!=\!\tilde{T}^{u}_s(\beta^0)$. Then, we have the following Theorem for the geometric convergence of $\tilde{T}^u_s$:

\begin{theorem}\label{thm:decentralized_sim_convergence}
Suppose (A.1-A.3) hold and the protected agent set $\mathcal{C}_\ell$ is a nonempty. Let $u \in \bigcap_{r \in \mathcal{C}_\ell} U^r $. Then, the sequence of projection operators $\tilde{T}^{u}_s$ induced by~\eqref{eqn:iterative_projection} converges geometrically in operator norm to $P^u_{\mathcal{C}^k_\ell}$.
\end{theorem}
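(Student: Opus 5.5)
The plan is to reduce the statement to the convergence theory of the simultaneous (averaged) projection method for finitely many closed subspaces of a Hilbert space, exploiting the fact that each local kernel $\ker(u,\{r\})$ has \emph{finite codimension}. Fix $u\in\bigcap_{r\in\mathcal{C}_\ell}U^r$ and write $K_r:=\ker(u,\{r\})=\ker R\mathcal{L}_{(u,x^r)}$. Since $R\mathcal{L}_{(u,x^r)}:\cV\to\mathbb{R}^{n_o}$ is bounded and, by (A.2), surjective, $K_r$ is closed with $K_r^\perp=\operatorname{range}\big((R\mathcal{L}_{(u,x^r)})^*\big)$. This range is the $n_o$-dimensional span of the columns of $m^u_r(\cdot)^\top$ from~\eqref{eqn:defn_mui}. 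Hence each $P^u_r$ is an orthogonal projection whose complement $Q_r:=I-P^u_r$ has rank $n_o$. The target is $P^u_{\mathcal{C}_\ell}$, the orthogonal projection onto $K:=\bigcap_r K_r=\ker(u,\mathcal{C}_\ell)$, and by~\eqref{eqn:ker_tangent} $K=T_u(\cap_r U^r)$.

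First I establish the structure of $\tilde T^u=\frac1{|\mathcal{C}_\ell|}\sum_r P^u_r$. It is self-adjoint, positive, and satisfies $\|\tilde T^u\|\le1$. Moreover $\tilde T^u\beta=\beta$ if and only if $\beta\in K$: from $\langle\tilde T^u\beta,\beta\rangle=\|\beta\|^2$ we get $\frac1{|\mathcal{C}_\ell|}\sum_r\|P^u_r\beta\|^2=\|\beta\|^2$, which forces $P^u_r\beta=\beta$ for every $r$. Both $K$ and $W:=K^\perp=\sum_r K_r^\perp$ are invariant under $\tilde T^u$, and $W$ is finite dimensional, with $\dim W\le n_o|\mathcal{C}_\ell|$. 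On $K$ the operator $\tilde T^u$ acts as the identity. On $W$ it equals $I-\frac1{|\mathcal{C}_\ell|}\sum_r Q_r$. Here $\sum_r Q_r$ is positive semidefinite, and it is definite on $W$: if $\sum_r\langle Q_r w,w\rangle=0$ then every $Q_r w=0$, so $w\in K\cap W=\{0\}$. So $\tilde T^u|_W$ is a symmetric matrix with eigenvalues in $[0,1)$. Let $q:=\|\tilde T^u|_W\|<1$; it is a maximum over the finitely many eigenvalues, which is where finite dimensionality enters.

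Next comes the geometric rate. Decompose $\beta=P^u_{\mathcal{C}_\ell}\beta+w$ with $w\in W$. Then $\tilde T^u_s\beta=P^u_{\mathcal{C}_\ell}\beta+(\tilde T^u|_W)^s w$, so
\[
\big\|\tilde T^u_s-P^u_{\mathcal{C}_\ell}\big\|_{\mathrm{op}}=\big\|(\tilde T^u|_W)^s\big\|=q^s .
\]
This is geometric convergence in operator norm. I will also note that $q=1-\lambda_{\min}\big(\frac1{|\mathcal{C}_\ell|}\sum_rQ_r|_W\big)$, which is related to the Friedrichs angles between the subspaces $K_r$.

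The main obstacle is justifying $q<1$ uniformly, i.e.\ excluding spectrum of $\tilde T^u$ accumulating at $1$, which can happen for general closed subspaces in infinite dimensions. The finite-codimension argument above handles this: it only needs each $K_r^\perp$ to be finite dimensional, so that $W$ is finite dimensional and closed, and $K^\perp=\overline{\sum_rK_r^\perp}=\sum_rK_r^\perp$. This step relies on surjectivity from (A.2) so that each $Q_r$ is a well-defined rank-$n_o$ projection, namely $Q_r=L^*(LL^*)^{-1}L$ with $L=R\mathcal{L}_{(u,x^r)}$. Assumption (A.3) (non-collinearity of the responses) and the Banach-submanifold result are used only to identify $K$ with the tangent space in~\eqref{eqn:ker_tangent}. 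They would also give strictly better rates by ensuring the $K_r^\perp$ are not nested, but they are not required for $q<1$.
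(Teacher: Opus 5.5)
Your proof is correct, but it takes a different route from the paper.

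\textbf{The paper's route.} The paper's proof is a short reduction to a known theorem:
\begin{itemize}
\item It invokes \cite[Prop.~2]{bayram2025geometric} to get that each $T_uU^r$ is closed with finite-dimensional orthogonal complement.
\item It concludes that the finite sum $\sum_r T_u^\perp U^r$ is closed.
\item It then cites \cite[Thm.~1.2]{reich2021error}. That theorem says closedness of the sum of the complements is exactly what makes the simultaneous projection method converge geometrically in operator norm.
\end{itemize}

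\textbf{Your route.} You prove the needed special case directly. Because $W=K^\perp=\sum_r K_r^\perp$ is finite dimensional, the averaged operator $\tilde T^u$ splits into two pieces: the identity on $K$, and a positive symmetric matrix on $W$ with spectrum in $[0,1)$. This yields the exact identity $\|\tilde T^u_s-P^u_{\mathcal{C}_\ell}\|=q^s$, not just an upper bound.

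\textbf{What your route buys.} First, it is self-contained. Second, the rate $q=1-\lambda_{\min}\bigl(\frac{1}{|\mathcal{C}_\ell|}\sum_r Q_r|_W\bigr)$ is explicit and can in principle be computed from the Gram structure of the responses $m^u_r$. Third, it shows more clearly which hypotheses matter, and you need even less than you claim:
\begin{itemize}
\item $K_r^\perp=\operatorname{range}\bigl((R\mathcal{L}_{(u,x^r)})^*\bigr)$ is finite dimensional for any bounded map into $\mathbb{R}^{n_o}$, so finite dimensionality of $W$ does not depend on (A.2).
\item (A.2) only supplies the closed form $Q_r=L^*(LL^*)^{-1}L$ and the exact rank $n_o$.
\item (A.1), (A.3), and $u\in\bigcap_rU^r$ serve only to identify $K$ with $T_u(\bigcap_r U^r)$.
\end{itemize}

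\textbf{What the paper's route buys.} It is shorter. It also covers arbitrary closed subspaces whose orthogonal complements have closed sum; the finite-codimension structure is used only to verify that closedness. That extra generality is not needed here.

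Your side remark that (A.3) would give strictly better rates is not substantiated, but it plays no role in the proof.
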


Now, we can extend this result to the released update $\xi$ via the following Corollary:

\begin{corollary}\label{cor:intersection}
    If $\beta^0$ in~\eqref{eqn:iterative_projection} is set to $\xi$ in~\eqref{eqn:dp_gradient_update}, then $\xi$ is projected onto $\operatorname{ker}(u,\mathcal{C}_\ell)$ as $s \to \infty$ at a geometric rate.
\end{corollary}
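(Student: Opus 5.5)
Corollary~\ref{cor:intersection} is an immediate specialization of Theorem~\ref{thm:decentralized_sim_convergence}: we instantiate the operator convergence at the particular vector $\beta^0=\xi$. The argument has three steps.

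\emph{Step 1 (admissibility of $\xi$).} By~\eqref{eqn:dp_gradient_update}, $\xi=P_{\Lambda_T}(g_{q,\bar u})+\mathcal{W}_{\Lambda_T}$ is almost surely a finite sum of elements of $\Lambda_T\subset\cV$. Hence $\xi\in\cV$ with $\|\xi\|<\infty$. Moreover, $\xi$ is generated at the current iterate $u$ and does not depend on $s$. So the middle loop $\beta^s=\tilde T^u_s(\beta^0)$, which holds by Corollary~\ref{cor:gossip} and Algorithm~\ref{alg:priv_gossip}, is well defined with $\beta^0=\xi$.

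\emph{Step 2 (from operator norm to a pointwise bound).} Theorem~\ref{thm:decentralized_sim_convergence} gives, under (A.1--A.3) and $u\in\bigcap_{r\in\mathcal{C}_\ell}U^r$, constants $c>0$ and $\eta\in[0,1)$ such that
\begin{align*}
\|\tilde T^u_s-P^u_{\mathcal{C}_\ell}\|_{\mathrm{op}}\le c\,\eta^s .
\end{align*}
Applying this operator inequality to $\xi$ yields
\begin{align*}
\|\beta^s-P^u_{\mathcal{C}_\ell}(\xi)\|\le c\,\eta^s\|\xi\| .
\end{align*}
This is geometric convergence of $\beta^s$ to $P^u_{\mathcal{C}_\ell}(\xi)$. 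By~\eqref{eqn:defn_projector} and~\eqref{eqn:ker_tangent}, $P^u_{\mathcal{C}_\ell}(\xi)$ is the orthogonal projection of $\xi$ onto $\ker(u,\mathcal{C}_\ell)=T_u(\cap_{r}U^r)$. Since $\|\xi\|$ is almost surely finite (and its expectation is at most $B+\sigma B\sqrt{d}$), the bound holds almost surely, and also in expectation.

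\emph{Step 3 (hypotheses and the main subtlety).} The main point to check is that Theorem~\ref{thm:decentralized_sim_convergence} requires $u\in\bigcap_{r\in\mathcal{C}_\ell}U^r$, and this must hold at every control update. It does: by construction each update $u\gets u-\alpha\beta^S$ is chosen in the tangent space~\eqref{eqn:tangent_space}, and TwF keeps $u$ on the manifold. Two further remarks:
\begin{itemize}
\item If one wants to account for the gossip error at finite $M$ in Theorem~\ref{thm:gossip_convergence}, this is not needed here, since the statement uses the exact averages of Corollary~\ref{cor:gossip}.
\item All projections in the implementation act on the finite-dimensional $\mathbb{R}^{pT}$, so the operator norms there are ordinary matrix norms and no extra compactness argument is required.
\end{itemize}
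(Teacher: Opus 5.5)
Your proposal is correct and follows the paper's own route: the paper gives no separate proof of Corollary~\ref{cor:intersection}, treating it as the immediate specialization of the operator-norm geometric convergence in Theorem~\ref{thm:decentralized_sim_convergence} to the fixed vector $\beta^0=\xi$, which is exactly your bound $\|\beta^s-P^u_{\mathcal{C}_\ell}(\xi)\|\le c\,\eta^s\|\xi\|$. One caution on Step~3: a finite step $u-\alpha\beta^S$ along an only approximately tangent direction keeps $u$ on $\bigcap_{r}U^r$ only to first order in $\alpha$, so that justification is not literally valid, but you do not need it --- the corollary inherits the theorem's hypotheses at the current $u$, and in fact geometric convergence to the projection onto $\ker(u,\mathcal{C}_\ell)$ holds for every $u$, because each $\ker(u,\{r\})$ is the kernel of a bounded linear map into $\mathbb{R}^{n_o}$ and so has a finite-dimensional orthogonal complement.
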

 See Section~\ref{sec:proof} for a proof of Theorem~\ref{thm:decentralized_sim_convergence}. Since both inner and middle loops converge geometrically from Corollaries~\ref{cor:gossip} and~\ref{cor:intersection}, their nested execution  converges at a geometric rate for a sufficiently large  $M$, which is discussed in Section~\ref{sec:numerical}.

\subsection{Privacy Guarantees for Protected and Learner Agent}\label{subsec:privacy}

In this section, we establish privacy guarantees for the remaining agent roles. The privacy guarantees for the protected agents and the learner agent are established in Propositions~\ref{prop:protected_privacy} and~\ref{prop:privacy}, respectively. Then, we discuss an edge case and our proposed solution to it.

\paragraph{Privacy of the Protected Agents:} We adopt the honest-but-curious adversary model from~\cite[Definition 1]{gao2018privacy}, in which each agent follows the protocol correctly but attempts to infer the initial states of other agents from received messages, without collusion. Then, the privacy of the initial states $\pi_r[0]$ in Algorithm~\ref{alg:priv_gossip}, depends on the topology of the graph.

\begin{proposition}\label{prop:protected_privacy}
Suppose ${G}_{\mathcal{C}_\ell^k}$ is connected and all agents are  honest-but-curious for all $k$. Then, for any protected agent $r \in \mathcal{C}^k_\ell$ such that $|{N}_{r,\mathcal{C}^k_\ell} |\geq 2$, no other agent running Algorithm~\ref{alg:priv_gossip} can uniquely determine the initial state of agent $r$ and cannot determine a finite range for $\pi_{r}[0]$.
\end{proposition}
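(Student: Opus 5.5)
The argument is an indistinguishability construction in the style of \cite[Thm.~2]{gao2018privacy}, applied to Algorithm~\ref{alg:priv_gossip} on the induced subgraph $G_{\mathcal{C}^k_\ell}$. Fix an honest-but-curious agent $a \neq r$. Its view $\mathcal{I}_a$ is everything it can observe: its own state trajectory $\{\pi_a[m],\omega_a[m]\}_{m\ge 0}$, its own weights $p^1_{a\cdot},p^2_{a\cdot}$, and the messages $p^1_{ja}\pi_j[m]$, $p^2_{ja}\omega_j[m]$ it receives from its neighbors $j$. The claim follows if we show two things. First, for any candidate initial state $\tilde\pi_r[0] \neq \pi_r[0]$, possibly arbitrarily far away, there is another execution of the protocol that is admissible and produces exactly the same view $\mathcal{I}_a$. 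Second, that execution keeps every weight consistent with~\eqref{eqn:stochastic_generate}. Since $\tilde\pi_r[0]$ can be any vector, $a$ can neither determine $\pi_r[0]$ nor confine it to a bounded set.

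\textbf{Construction.} Because $|N_{r,\mathcal{C}^k_\ell}|\ge 2$, agent $r$ has at least one neighbor $j^\ast \neq a$. If $a$ is itself a neighbor of $r$, then $r$ has one neighbor besides $a$. If $a$ is not a neighbor of $r$, pick any neighbor $j^\ast$ of $r$.

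In the alternative execution, set $\tilde\pi_r[0]=\pi_r[0]+e$ and $\tilde\pi_{j^\ast}[0]=\pi_{j^\ast}[0]-e$. This preserves the sum $\sum_r \pi_r[0]$, so the consensus value $\bar\pi$ in Theorem~\ref{thm:gossip_convergence} is unchanged. Also set $\tilde\omega=\omega$ throughout, so all $\omega$-messages coincide.

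We then redesign $\tilde p^1_{r\cdot}[0]$ and $\tilde p^1_{j^\ast\cdot}[0]$ during the masking phase $m<K$. The aim is that the perturbation $e$ is exchanged along the edge $(r,j^\ast)$ and absorbed into the self-weights. After this, every message sent to $a$ equals the original one. Concretely, at $m=0$ we require $\tilde p^1_{ra}\tilde\pi_r[0]=p^1_{ra}\pi_r[0]$. The vector $\tilde\pi_r$ may have different entrywise scaling, but we can treat the states coordinatewise, or take $K\ge 1$ and choose the weight sent to $a$ at $m=0$ so that the scaled message matches. The remaining mass is rebalanced between $\tilde p^1_{rr}$ and $\tilde p^1_{rj^\ast}$ so that the entries still sum to one.

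Next, choose $\tilde p^1_{j^\ast r}$ and $\tilde p^1_{rr}$ so that the states after step $1$ satisfy $\tilde\pi_r[1]=\pi_r[1]$ and $\tilde\pi_{j^\ast}[1]=\pi_{j^\ast}[1]$. From then on the two executions coincide, and $a$'s view is identical. An induction over $m$ confirms this. When $a=j^\ast$ is impossible, the case where $a$ is adjacent to both $r$ and $j^\ast$ is handled by the same absorption, because $a$ only sees products $p\,\pi$ with unknown $p$.

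\textbf{Main obstacle.} The hard step is proving that the redesigned weights stay feasible: nonnegative, at least $\kappa$ on the support, and summing to one. This has to hold for arbitrarily large $\|e\|$, which is what "no finite range" demands. I would first allow the perturbation to be spread over several masking steps. Splitting $e=\sum_{m<K} e_m$ makes each step's perturbation small enough that only a small rebalancing of weights is needed. If the fixed horizon $K$ is not enough, I would instead use the fact that agent $a$ never observes the weights chosen by other agents. Because of this, a scaling of $\pi_r$ can be compensated by a reciprocal scaling of $p^1_{ra}$, and only the products $p^1_{ra}\pi_r[m]$ have to match. This gives the unbounded family of indistinguishable executions, and the conclusion then follows as in \cite[Thm.~2]{gao2018privacy}.
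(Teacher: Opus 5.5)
The step that fails is the one you flag as the main obstacle, and neither of your proposed fixes repairs it.

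If $a\in N_{r,\mathcal{C}_\ell}$, the first message $a$ receives from $r$ is $p^1_{ra}[0]\,\pi_r[0]$. By \eqref{eqn:stochastic_generate}, $p^1_{ra}[0]$ is a single scalar in $[\kappa,\,1-\kappa|N_{r,\mathcal{C}_\ell}|]$, and it multiplies all $pT$ coordinates of $\pi_r[0]$. Any execution indistinguishable to $a$ must reproduce this message. Hence $\tilde\pi_r[0]=c\,\pi_r[0]$ with $c\in[\kappa/(1-\kappa|N_{r,\mathcal{C}_\ell}|),\,(1-\kappa|N_{r,\mathcal{C}_\ell}|)/\kappa]$. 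This has three consequences:
\begin{itemize}
\item A coordinatewise treatment is unavailable, since $e$ must be a multiple of $\pi_r[0]$.
\item Spreading $e$ over several masking rounds does not help, because the constraint already binds at $m=0$, on the initial state itself.
\item An unbounded reciprocal rescaling of $p^1_{ra}$ is impossible, because that weight is floored at $\kappa$.
\end{itemize}
In fact, $a$ can itself conclude that $\pi_r[0]$ lies on the bounded segment $\{t\,p^1_{ra}[0]\pi_r[0] : t\in[1/(1-\kappa|N_{r,\mathcal{C}_\ell}|),\,1/\kappa]\}$. So no construction of your type can yield the ``no finite range'' conclusion for a neighbor of $r$.

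Your remark that the case where $a$ is adjacent to both $r$ and $j^\ast$ is ``handled by the same absorption'' also fails for vector states. Take a triangle $\{r,a,j^\ast\}$. Agent $a$ learns the directions of $\pi_r[0]$ and $\pi_{j^\ast}[0]$ from its two $m=0$ messages. It learns $\pi_r[0]+\pi_{j^\ast}[0]$ from the consensus value (the limit of its own estimate) together with its own $\pi_a[0]$. When the two directions are linearly independent, this determines $\pi_r[0]$ exactly, even though $|N_{r,\mathcal{C}_\ell}|=2$.

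The paper's proof constructs no alternative executions. It doubles each undirected edge of $G_{\mathcal{C}_\ell}$ to obtain a strongly connected digraph in which the in- and out-neighborhoods of $r$ both equal $N_{r,\mathcal{C}_\ell}$. The hypothesis $|N^{in}_{r}\cup N^{out}_{r}|\ge 2$ of \cite[Thm.~2]{gao2018privacy} then reduces to $|N_{r,\mathcal{C}_\ell}|\ge 2$, and the conclusion is imported. Your closing appeal to that theorem would give the same proof once this identification is stated.

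The explicit construction you place in front of it, however, cannot be completed under the weights \eqref{eqn:stochastic_generate}, with $pT$-dimensional states and one shared scalar weight per edge. The obstruction above is what any self-contained argument must confront. It is also what any check that the cited theorem's setting matches Algorithm~\ref{alg:priv_gossip} must confront.
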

\begin{proof} We first construct a directed graph $\vec{G}_{\mathcal{C}_\ell}$ from $G_{\mathcal{C}_\ell}$ by replacing each undirected edge with two directed edges. Under this construction, $\vec{G}_{\mathcal{C}_\ell}$ is strongly connected. All nodes follow Algorithm~\ref{alg:priv_gossip}. Then, from~\cite[Theorem 2]{gao2018privacy}, the privacy of a node $r$ is preserved if the cardinality of the union of its in-neighbors and out-neighbors is greater than one, i.e. $|{N}^{in}_{r,\mathcal{C}_\ell} \cup {N}^{out}_{r,\mathcal{C}_\ell}|\geq 2$. By construction, the in-neighbor and out-neighbor sets of every agent $r$ coincide in $\vec{{G}_{\mathcal{C}_\ell}}$, from which $|{N}_{r,\mathcal{C}_\ell} |\geq 2$ follows.
\end{proof}

\paragraph{Privacy of the Learner Agent:} To formalize privacy of the learner agent, we adopt the notion of indistinguishability because it depends on whether an adversary can uniquely recover its sample $x^\ell$ from the shared sequence of controls in the network. To this end, for a sequence of control $\Gamma$ and a label $y^i$, we define the \emph{indistinguishability set} of $x^i$ over $\Gamma$: 
\begin{align}\label{eqn:defn_I_set}
    \mathcal{I}_\Gamma(x^i) := 
    \{x \in \mathbb{R}^{\bar{n}} \mid R(\varphi(u,x)) = y^i,\ 
    \forall u \in \Gamma\}.
\end{align}
\begin{definition} For a given control sequence $\Gamma$, the learner agent {preserves privacy over $\Gamma$} if 
$$\operatorname{card}(\mathcal{I}_\Gamma(x^\ell)) > 1.$$
\end{definition}

In words, if the control sequence produces the same output $y^i$ under $R(\varphi(u,\cdot))$ for more than one initial point, the learner agent preserves privacy over $\Gamma$. To this end, we define the sequence of controls $\Gamma^q_{\mathcal{C}_\ell}$ that is shared with the network during the outer iteration in which the learner memorizes the sample of teacher agent $q$ with a nonempty set $\mathcal{C}_\ell$. The following proposition shows that TwF inherently provides indistinguishability under $\Gamma^q_{\mathcal{C}_\ell}$:

\begin{proposition}\label{prop:privacy}
If there exists $m \in \mathcal{C}_\ell$ with $m \neq \ell$ and $y^m = y^\ell$,
then $\mathrm{card}\big(I_{\Gamma^q_{\mathcal{C}_\ell}}(x^\ell)\big) > 1$.
\end{proposition}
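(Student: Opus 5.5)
The plan is to exhibit a second point in $\mathcal{I}_{\Gamma^q_{\mathcal{C}_\ell}}(x^\ell)$, namely $x^m$ itself. Since the inputs are pairwise distinct, $\bar x^m \neq \bar x^\ell$. It is then enough to show two things: $x^\ell$ lies in the indistinguishability set, and $x^m$ lies in it too.

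The key step is to show that every control $u$ in $\Gamma^q_{\mathcal{C}_\ell}$ lies in $\bigcap_{r \in \mathcal{C}_\ell} U^r$. I will argue this by induction on the control updates within the outer iteration for teacher $q$.

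For the base case, the outer iteration starts from a control that has already memorized every sample in $\mathcal{C}_\ell$. When $\mathcal{C}_\ell$ is the initial set, this comes from Algorithm~\ref{alg:phase1}. When $\mathcal{C}_\ell$ was obtained by adding earlier teachers, each was added only after $u \in U^{q'}$ held and earlier protected mappings were preserved.

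For the inductive step, each update is $\delta u = \beta^S$. By Theorem~\ref{thm:decentralized_sim_convergence} and Corollary~\ref{cor:intersection}, $\beta^S$ is, in the limit, $P^u_{\mathcal{C}_\ell}(\xi) \in \ker(u,\mathcal{C}_\ell)$. By \eqref{eqn:ker_tangent}, this kernel equals the tangent space $T_u(\cap_{r} U^r)$. By the TwF property \eqref{eqn:tangent_space}, the updated control remains on the submanifold $\bigcap_{r \in \mathcal{C}_\ell} U^r$, so \eqref{eqn:invariant} holds at each step. I will state this as the idealized TwF invariance, meaning exact projection followed by staying on the manifold, which is the standing interpretation in Section~\ref{subsec:projection}.

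With invariance in hand, the two memberships follow. Since $m \in \mathcal{C}_\ell$, every $u \in \Gamma^q_{\mathcal{C}_\ell}$ satisfies $R(\varphi(u,x^m)) = y^m = y^\ell$, so $x^m \in \mathcal{I}_{\Gamma^q_{\mathcal{C}_\ell}}(x^\ell)$. For $x^\ell$ itself, I need $\ell \in \mathcal{C}_\ell$, or at least $u \in U^\ell$ throughout. This holds because the learner memorizes its own sample first and it is protected thereafter (Line~1 of Algorithm~\ref{alg:main}). The set therefore contains the two distinct points $x^\ell$ and $x^m$, and its cardinality exceeds one.

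The main obstacle is the invariance step. Tangent-space updates keep $u$ on the manifold only to first order, or along a curve, unless the update is realized by a retraction. The argument therefore has to rely on the TwF guarantee as stated in the paper, namely that $u + \delta u$ remains in $\bigcap_r U^r$ for tangent $\delta u$, rather than prove it afresh. A second point to make explicit is that $\ell$ is protected from initialization, so that $x^\ell$ is itself in the indistinguishability set.
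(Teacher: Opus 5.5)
Your proposal is correct and follows the paper's proof: TwF keeps every control in $\Gamma^q_{\mathcal{C}_\ell}$ on $\bigcap_{r\in\mathcal{C}_\ell}U^r$, so both $x^\ell$ and $x^m$ lie in the indistinguishability set. Your explicit induction, your check that $\ell\in\mathcal{C}_\ell$, and your caveat that this rests on the paper's idealized TwF invariance (exact projection, $S\to\infty$, $u+\delta u$ staying on the manifold) are all details the paper's proof leaves implicit. One small point: if $m$ is the fictitious agent $\ell'$, then $x^m\neq x^\ell$ follows from the construction $0<\|x^{\ell'}-x^\ell\|_2$, not from pairwise distinctness of the dataset.
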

\begin{proof}
Since $m \in \mathcal{C}^k_\ell$ for all $k$ during the teacher phase of agent $q$,
TwF restricts each update $\delta u^k$ to $T_u(\bigcap_{r \in \mathcal{C}^k_\ell} U^r)$.
Hence the sequence $\Gamma^q_{\mathcal{C}_\ell}$ remains in the submanifold
$\bigcap_{r \in \mathcal{C}^k_\ell} U^r$. Since $y^m = y^\ell$, it holds that
$R(\varphi(u,x^m)) = y^\ell$ for all $u \in \Gamma^q_{\mathcal{C}_\ell}$. Hence the
indistinguishability set of $x^\ell$ over $\Gamma^q_{\mathcal{C}_\ell}$ contains at
least the two elements $\{x^m, x^\ell\}$, where $x^m \neq x^\ell$.
\end{proof}

Both Proposition~\ref{prop:privacy} and Proposition~\ref{prop:protected_privacy} require $\mathcal{C}_\ell$ to be large enough for privacy of the learner. At initialization, when $\mathcal{C}_\ell=\{\ell\}$, neither condition is guaranteed. To resolve this, before the main loop via Algorithm~\ref{alg:phase1}, the learner introduces a fictitious sample for a fictitious agent $\ell'$:
\begin{align}
    x^{\ell'} \quad \text{s.t.} \quad 
    0 < \|x^{\ell'} - x^\ell\|_2 \leq \kappa, \quad 
    y^{\ell'} = y^\ell,
\end{align}
and memorizes it. Thus, the learner sets $\mathcal{C}_\ell \leftarrow \{\ell, \ell'\}$ before Algorithm~\ref{alg:main} begins. The fictitious agent $\ell'$ participates in the gossip process as an additional node connected to $\ell$ and its neighbors. 
This construction satisfies $|N_{\ell, \mathcal{C}_\ell}| \geq 2$ for Proposition~\ref{prop:protected_privacy} and the label 
co-occurrence condition for Proposition~\ref{prop:privacy} so that the privacy of the learner agent is guaranteed. Once $\mathcal{C}_\ell$ expands sufficiently, all fictitious samples are removed. Removing a fictitious sample enlarges $\ker(u, \mathcal{C}\ell)$ and thus relaxes the constraint on subsequent updates. Since TwF subsequently forgets any sample removed from $\mathcal{C}\ell$, this removal does not affect model performance.


%

\begin{remark}
Any other agent $r \in \mathcal{C}_\ell^k$ with $|N_{r, \mathcal{C}_\ell^k}| < 2$ may apply the same construction by Algorithm~\ref{alg:phase1} to satisfy Proposition~\ref{prop:protected_privacy} for any given connected graph.    
\end{remark}

\begin{algorithm}
\caption{Fictitious Sample For Agent $r$}
\label{alg:phase1}
\begin{algorithmic}[1]
\STATE \textbf{Select}  $ x^{r'} \quad  \mbox{s.t.} \quad  0 < \| x^{r'} - x^{r} \|_2 \leq \kappa  \mbox{ and } y^{r'} = y^r$
\FOR{$j \in \{r, r'\}$}
    \REPEAT
        \STATE $u \gets u - \alpha \,
        P_{\mathcal{C}}^u\!\big(
        \nabla {J}_j( u + \gamma^*_{j,u} )
        \big)$
    \UNTIL{convergence for sample $j$}
    \STATE $\mathcal{C} \gets 
    \mathcal{C} \cup \{ j \}$
\ENDFOR
\STATE \textbf{Return:} $u$, $\mathcal{C}$
\end{algorithmic}
\end{algorithm}

\section{Proofs}\label{sec:proof}
\begin{proof}[Proof of Proposition~\ref{prop:dp_teacher}] We fix the control $u$ at the current iteration. Let $D=(x^q,y^q)$ and let $D'=(x^{q'},y^{q'})$ be an adjacent dataset of $D$. The released update $\xi$ in~\eqref{eqn:mechanism} is computed in two sequential steps, each depending on the private 
sample $(x^q, y^q)$. The clipped gradient $g_{q,\bar{u}}$ is evaluated at
$\bar u = u + \gamma^{*}_{q,u}$, and $\gamma^{*}_{q,u}$ is a deterministic
function of $D$ that is never released. Hence $D \mapsto P_\Lambda(g_{q,\bar{u}})$
is a deterministic query of the dataset, and $\mathcal{M}_\Lambda$ is the
Gaussian mechanism applied to it.

Let $\{\zeta_l\}_{l=1}^d$ be the orthonormal basis for given $\Lambda$. Then, Gaussian process coefficients from~\eqref{eqn:defn_noise_process} satisfy $\langle \mathcal{W}_\Lambda, \zeta_l \rangle = \chi_l $ where $\chi_l \sim \mathcal{N}(0,\sigma^2 B^2 )$, 
and are mutually independent. Then, we have the released vector~\eqref{eqn:mechanism}:

\begin{align}
  \bigl(\langle \mathcal{M}_{\Lambda}(D),
  \zeta_l\rangle\bigr)_{l=1}^d
  = h_D +
  \bigl(\langle\mathcal{W}_\Lambda,
  \zeta_l\rangle\bigr)_{l=1}^d \in\mathbb{R}^{d}
\end{align}
where $ h_D := \bigl( \langle g_{q,\bar{u}},
  \zeta_l\rangle\bigr)_{l=1}^d
  \in \mathbb{R}^{d}$.  Let $f := g_{q,\bar{u}} - g_{q',\bar{u}'}$. Since the basis $\{\zeta_l\}_{l=1}^d$ is orthonormal, Parseval's identity gives:
\begin{align}\label{eqn:sensitivity}
    \sup_{D \sim D'} \|h_D - h_{D'}\|_2^2
    = \sup_{D \sim D'} \sum_{l=1}^d |\langle f, \zeta_l \rangle|^2 \leq 4B^2,
\end{align}
{where the last bound follows from the clipping~\eqref{eqn:gradient}, which gives
$\|g_{q,\bar u}\| \le B$ for every admissible dataset.} By~\cite[Thm.~A.1]{dwork2014algorithmic}, a Gaussian mechanism with diagonal covariance $\Sigma = \sigma^2 B^2 I$ satisfies $(\varepsilon, \delta)$-DP 
if
\begin{align}\label{eqn:dwork}
    \sup_{D \sim D'} \|\Sigma^{-1/2}(h_D - h_{D'})\|_2 
    \leq \frac{\varepsilon}{\sqrt{2\log(1.25/\delta)}}.
\end{align}
From~\eqref{eqn:sensitivity}, the condition~\eqref{eqn:dwork} holds if $\sigma$ satisfies~\eqref{eqn:prop_bound}, from which the result follows. Since $D$ and $D'$ range over all admissible single-sample datasets, the
guarantee is local $(\varepsilon,\delta)$-DP on $\Lambda$.
\end{proof}

\begin{proof}[Proof of Theorem~\ref{thm:gossip_convergence}] We follow the construction of the directed graph $\vec{G}_{\mathcal{C}_\ell}$ given in the proof of Proposition~\ref{prop:protected_privacy}. Under this construction, the in-neighbor and out-neighbor sets of every agent $r$ coincide and $\vec{G}_{\mathcal{C}_\ell}$ is
strongly connected. Then, from~\cite[Theorem~1]{gao2018privacy}, the private push-sum protocol ensures that the estimate of each participating agent $\hat{\pi}_{r,s}[m]$ converges to the average of the initial states of participating agents $\bar{\pi}$ if each agent follows Algorithm~\ref{alg:priv_gossip} with $\kappa>0$ and finite $K$. It remains to establish the geometric rate.

Without loss of generality, we take $\pi_r \in \mathbb{R}$ and let $a := |\mathcal{C}_\ell|$, the general case follows from component-wise scaling. Then, we collect the states of all protected agents into a state vector
$$
    \pi[m] := [\pi_{r_1}[m], \ldots, \pi_{r_{a}}[m]]
    \in \mathbb{R}^{a},
$$ where $r_1, \ldots, r_{a}$ is any fixed indexing of $\mathcal{C}_\ell$.
For $m \geq K+1$, we define the communication matrix 
${P}[m] \in \mathbb{R}^{a \times a}$ with entries
$[{P}[m]]_{r_i r_j} := p_{r_i r_j}[m]$, which are generated by~\eqref{eqn:stochastic_generate}. ${P}[m]$ is row-stochastic by $\sum_{r_j} p_{r_i r_j}[m] = 1$. Then, the update rules in Lines~$14-15$ of Algorithm~\ref{alg:priv_gossip} follow:
\begin{align}\label{eqn:markov}
    \pi[m+1] = \pi[m] {P}[m]
    \qquad 
    \omega[m+1] = \omega[m] {P}[m]
\end{align}
Let $\hat{\pi}[m] \in \mathbb{R}^{a}$ be the estimate vector at iteration $m$, whose $r$-th entry is $\hat{\pi}_r[m] := \pi_r[m]/\omega_r[m]$.  Let  
$\bar{\pi} := \frac{1}{a}\sum_r \pi_r[0]$, where we have 
$\omega_r[0] = 1$ for all $r$. Then, we define the error vector as follows:
$$
e[m] := \pi[m] - \bar{\pi}\,\omega[m] \in \mathbb{R}^{a}.
$$
Since both $\pi$ and $\omega$ evolve by the same matrix 
in~\eqref{eqn:markov} for $m > K$, the error $e$ also evolves by the same matrix $P[m]$:
$$
e[m+1] = e[m]\,P[m], \quad m > K.
$$
By row-stochasticity and $\omega_r[0] = 1$, mass is conserved: 
$\sum_r \omega_r[m] = a$ and $\sum_r \pi_r[m] = \sum_r \pi_r[0]$ 
for all $m$, which gives $\sum_r e_r[m] = 0$ for all $m$.  Since $\omega_r[0] = 1$ and $p^2_{rr}[m] \geq \kappa > 0$ for all $m \geq 0$, each agent retains a positive fraction of its own mass at every iteration, so 
$\omega_r[m] > 0$ for all $m \geq 0$ and all $r$. Combined with 
mass conservation $\sum_r \omega_r[m] = a$, there exists 
$c_\omega > 0$ such that $\omega_r[m] \geq c_\omega$ for all 
$m > K$ and all $r$.

Since $\omega_r[m] \geq c_\omega > 0$, 
we have the following upper bound for each $r$:
\begin{align}\label{eqn:error_bound}
|\hat{\pi}_r[m] - \bar{\pi}| 
= \frac{|\pi_r[m] - \bar{\pi}\,\omega_r[m]|}{\omega_r[m]}
= \frac{|e_r[m]|}{\omega_r[m]}
\leq \frac{\|e[m]\|_\infty}{c_\omega}.
\end{align}
It therefore suffices to establish geometric decay of $\|e[m]\|_\infty$.

Since $G_{\mathcal{C}_\ell}$ is connected and $p_{r_i r_i}[m] 
\geq \kappa > 0$ for all $r_i$, ${P}[m]$ is irreducible 
with positive diagonal entries for each $m > K$. By~\cite[Lemma 12]{bayram2023vector}, the product of any 
$l \geq \lfloor a /2 \rfloor$  irreducible row-stochastic 
matrices of dimensions $a\times a$ with positive diagonal entries is a scrambling matrix. Let $\upsilon=\lfloor a/2\rfloor$.
Hence ${Q} := \prod_{t=m_0}^{m_0+\upsilon-1} 
{P}[t]$ is scrambling for any $m_0 \geq K+1$, with all 
entries bounded below by $\kappa^{\upsilon} > 0$. Then,  the ergodicity coefficient satisfies $\mu({Q}) \leq 1 - \kappa^{\upsilon}$. We then apply the submultiplicative inequality $\|B_1B_2\|_S \leq \mu(B_1)\|B_2\|_S$ for any two stochastic matrices $B_1$ and $B_2$~\cite{horn2012matrix} over consecutive blocks of length $\upsilon$. This yields the geometric rate for each $r$ when $m>K$.
\end{proof}

\begin{proof}[Proof of Theorem~\ref{thm:decentralized_sim_convergence}] Let ${M} := \bigcap_{r \in \mathcal{C}_\ell^k} T_u U^r$ and let $P_{{M}} : \cV \to {M}$ denote the orthogonal projection onto ${M}$. Under Assumptions~(A.1)-(A.3), \cite[Corollary~1]{bayram2025geometric} ensures that ${M}$ is a nonempty closed subspace of $\cV$. Under the nonsingularity assumption~(A.2), by~\cite[Proposition~2]{bayram2025geometric}, the set $U^r$ 
is a finite-codimension Hilbert submanifold of $\cV$ for each $r \in \mathcal{C}_\ell$. Hence $T_u U^r$ is a closed linear subspace of finite codimension, its orthogonal complement $T_u^\perp U^r$ is finite-dimensional and closed. Since $\mathcal{C}_\ell$ is finite, 
the sum $\sum_{r \in \mathcal{C}_\ell} T_u^\perp U^r$ is a finite sum of closed subspaces and is therefore closed. By~\cite[Theorem~1.2]{reich2021error}, the closedness of $\sum_{r \in \mathcal{C}_\ell^k} T_u^\perp U^r$ implies that the sequence $\{\tilde{T}^u_s\}_{s \geq 0}$ 
in~\eqref{eqn:iterative_projection} converges geometrically  in operator norm to the $P_{{M}}$.\end{proof}

\section{Numerical Studies}\label{sec:numerical}

We consider two experimental settings. For memory preservation 
and convergence of the distributed projection, we use a 2D 
\textbf{circle classification task}, in which initial points $\bar{x}^i \in \mathbb{R}^2$ 
are classified as inside or outside the unit disk, with uplift 
$E:\mathbb{R}^2\to\mathbb{R}^{10}$ and a readout map $R$ 
projecting onto the last two coordinates (2-class one-hot).

For the privacy-performance tradeoff, we use \textbf{MNIST image classification}~\cite{lecun1998mnist}, in which each image 
is flattened to $\mathbb{R}^{784}$, uplifted to $\mathbb{R}^{834}$. We consider a readout map $R$ 
projecting onto the last 10 coordinates (10-class one-hot). 

\smallskip
\noindent\textbf{Neural ODE:} We consider a 100-layer residual neural network for both tasks, which is an Euler discretization of the following controlled ODE over $[0, 1]$ with $\Delta t =0.01$:
\begin{align}
    \dot{x}(t) = \tanh(W(t)x(t) + b(t))
\end{align}
where $\tanh$ acts component-wise and $u(t) = (W(t),b(t)) \in L^2([0,1],\mathbb{R}^{n \times n}  \times \mathbb{R}^{n})$.

\smallskip
\noindent\textbf{Sequential memorization:}
The right panel of Fig.~\ref{fig:numerical} depicts the per-agent loss 
$J_q(u)$ over global control update iterations $k$ for the first $20$ agents over the fully connected network. Each agent's sample is successfully memorized during its teacher phase, i.e  $J_q(u) \leq 0.1$. Then, each agent's loss remains below $\tau_{\mathrm{mem}}$ throughout all subsequent protected phases (shown in dashed lines). This confirms that the 
distributed computation of the projection $P^u_{\mathcal{C}_\ell}$ prevents forgetting.

\begin{figure*}
    \centering
    \begin{minipage}{0.33\linewidth}
        \includegraphics[width=\linewidth]{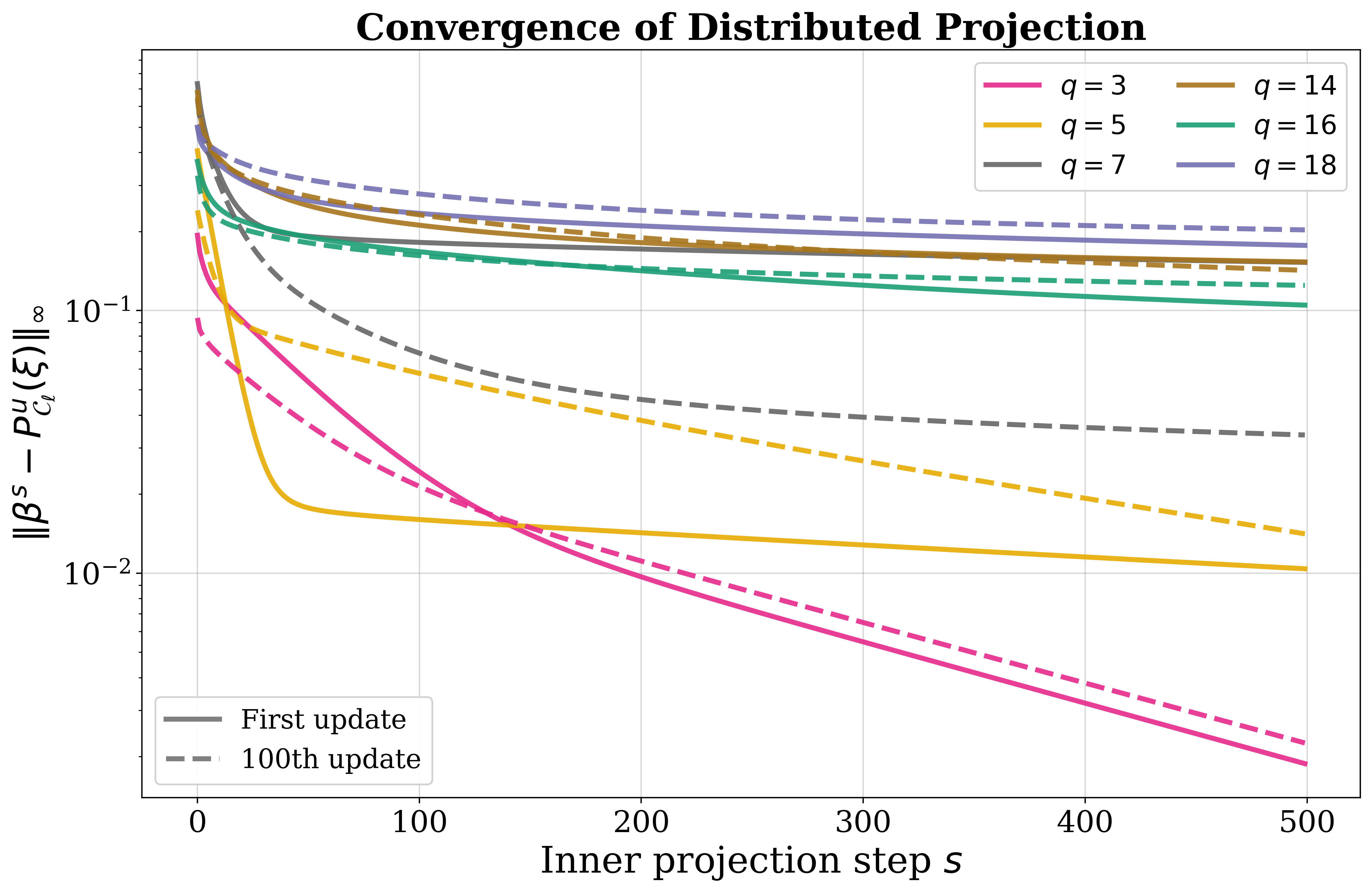}
    \end{minipage}
    \begin{minipage}{0.65\linewidth}
        \includegraphics[width=\linewidth]{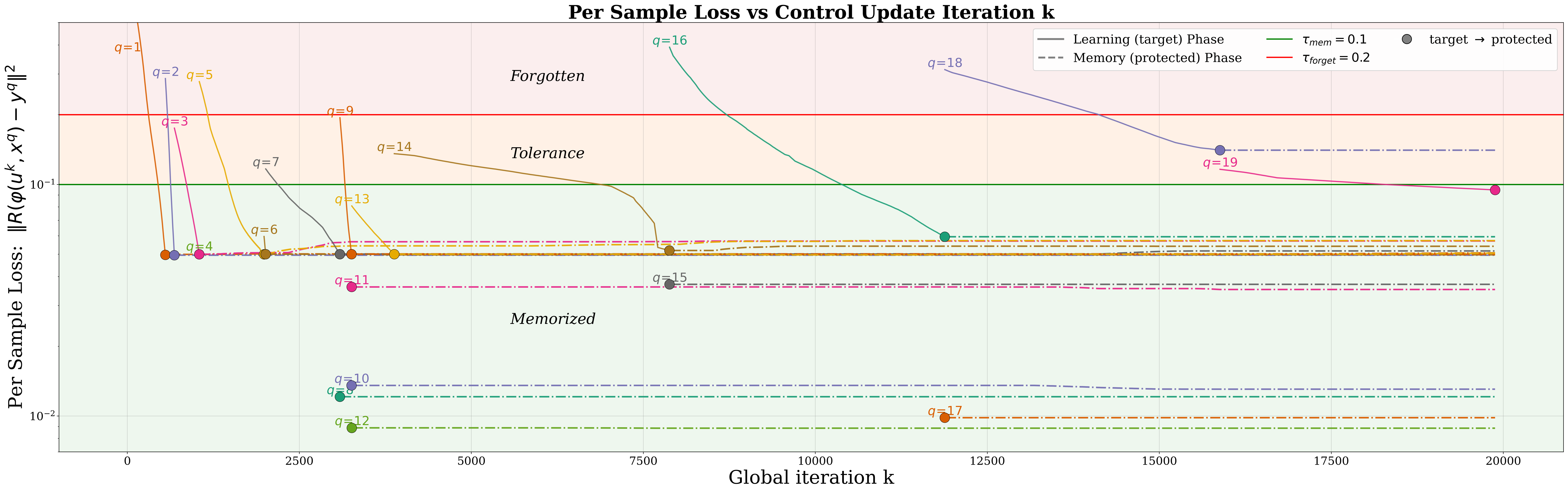}
    \end{minipage}
        \caption{(Left) The error $\|\beta^s - P^u_{\mathcal{C}_\ell}(\xi)\|_\infty$ measures the distance between the current iterate $\beta^s$ and the projection onto the intersection of tangent spaces $P^u_{\mathcal{C}_\ell}(\xi)$ over inner steps $s$. (Right) Evolution of per-agent loss $J_r(u)$ during sequential training for circle classifier. Solid and dashed lines show each agent during its teacher and protected phases, respectively. Circle $(\circ)$ marks $\mathcal{C}_\ell$ expansion iterations. Shaded bands indicate memorized ($J_r <0.1$), tolerance, and forgotten ($J_r > 0.2$) regions. 
The hyperparameters are: $\alpha = 0.01, \rho=0.1, \lambda = 0.2, \varepsilon = 3, \kappa = 0.01, S = 500, M = 500, K = 10$.  } 
\label{fig:numerical}
\end{figure*}

\smallskip
\noindent\textbf{Convergence of private gossip:} 
Fig.~\ref{fig:gossip} depicts the error $\|\hat{\pi}_r[m] - \frac{1}{|\mathcal{C}_\ell|}\sum_{r \in \mathcal{C}_\ell}\pi_r[0]\|_\infty$ as a function of gossip iteration $m$ for fully connected (FC) and ring topologies. In both cases, the error decreases at a geometric rate, which verifies Theorem~\ref{thm:gossip_convergence}. However, the ring network converges slower than the FC network. Privacy is preserved in both cases because each agent has at least two neighbors. As expected, convergence becomes faster for smaller $|\mathcal{C}_\ell|$. Since weights $p_r^1$ and $p^2_r$ are randomly resampled for privacy, the convergence is delayed by $K$ steps.

\smallskip
\noindent\textbf{Convergence of distributed projection:} 
The left panel of Fig.~2 depicts $\|\beta^s - P^u_{\mathcal{C}_\ell}(\xi)\|_\infty$ as a function of the middle loop iteration $s$, for the first and the 100th control updates for various teacher agents $q$. In both cases, the error decreases at a geometric rate, which confirms Theorem~\ref{thm:decentralized_sim_convergence}. The convergence slows down as $|\mathcal{C}_\ell|$ grows and the intersection of tangent spaces becomes more constrained.

\begin{figure}
    \centering
    \includegraphics[width=0.9\linewidth]{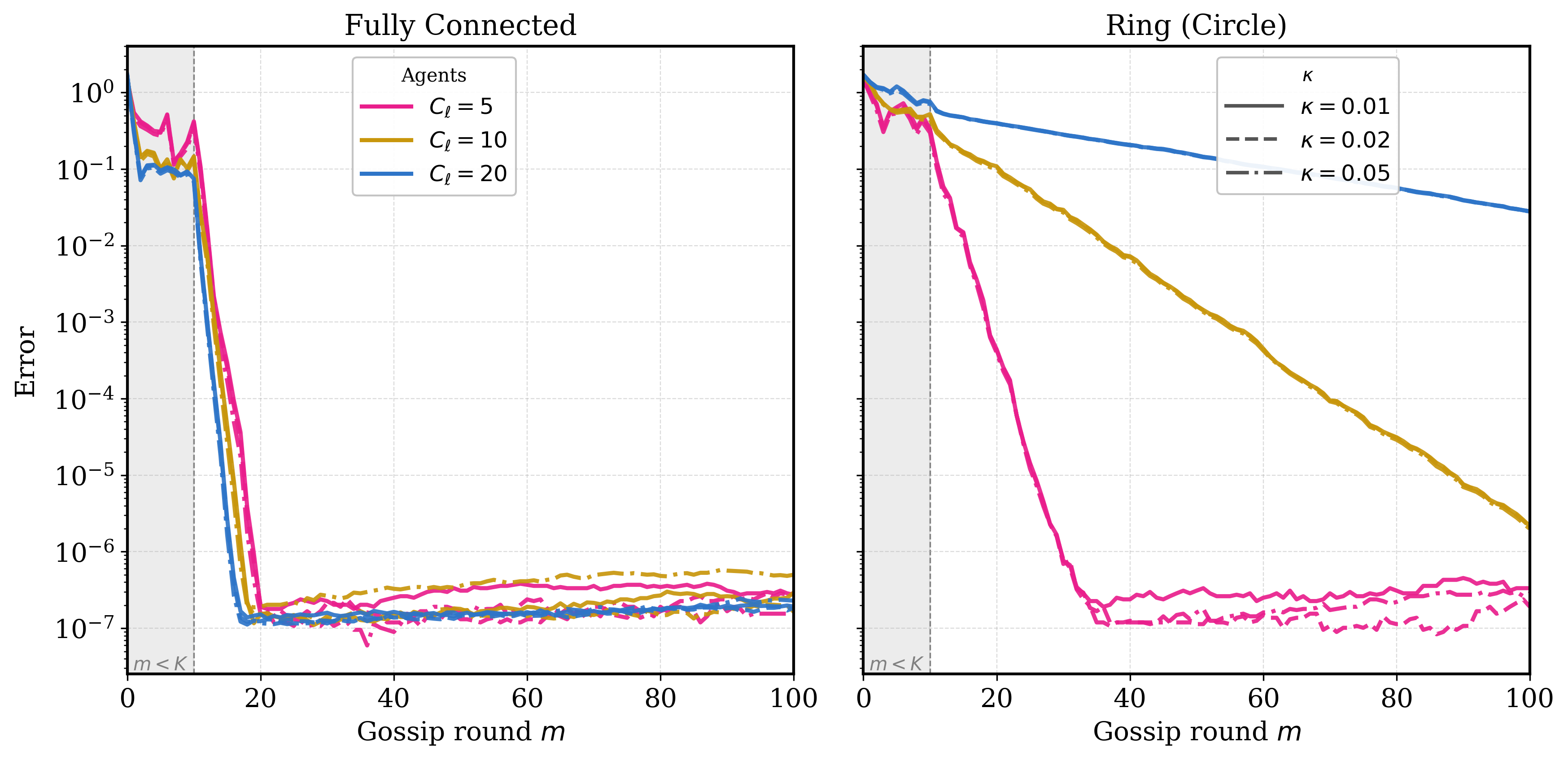}
    \caption{The error $\|\hat{\pi}_r[m] - \frac{1}{|\mathcal{C}_\ell|}\sum_{r \in \mathcal{C}_\ell}\pi_r[0]\|_\infty$ measures the distance between the current estimate $\hat{\pi}_r[m]$ and the uniform average of initial states as a function of gossip round $m$. The shaded region $m < K$ marks privacy phase. (Left) Fully Connected network. (Right) Ring network.}
    \label{fig:gossip}
\end{figure}

\smallskip
\noindent\textbf{Privacy-performance trade-off:}
We evaluate the proposed methods under different privacy budgets $\varepsilon \in \{1,2,3\}$ and non-private update $\varepsilon\!=\!\infty$. The teacher phase of an agent takes $K_q \approx 10^3$ control updates in these experiments. Thus, the budget over a
full teacher phase follows from the composition bound in Section~\ref{subsec:learning_new}. Figure~\ref{fig:acc_error} depicts the test accuracy on MNIST dataset and circle classification task as a function of privacy budget for $10$ different runs. For Robust DP (R-DP), we use the proposed release vector in~\eqref{eqn:dp_gradient_update} with $\rho=0.1$. For DP, we clip the gradient $\nabla_u J_q(u^k)$ and add the Gaussian noise process. As $\varepsilon$ increases, the privacy reduces. Robust DP mitigates the performance loss across all privacy budgets, achieving better accuracy than DP while preserving the same privacy guarantee in both MNIST and circle classification.

\begin{figure}
    \centering
    \includegraphics[width=0.9\linewidth]{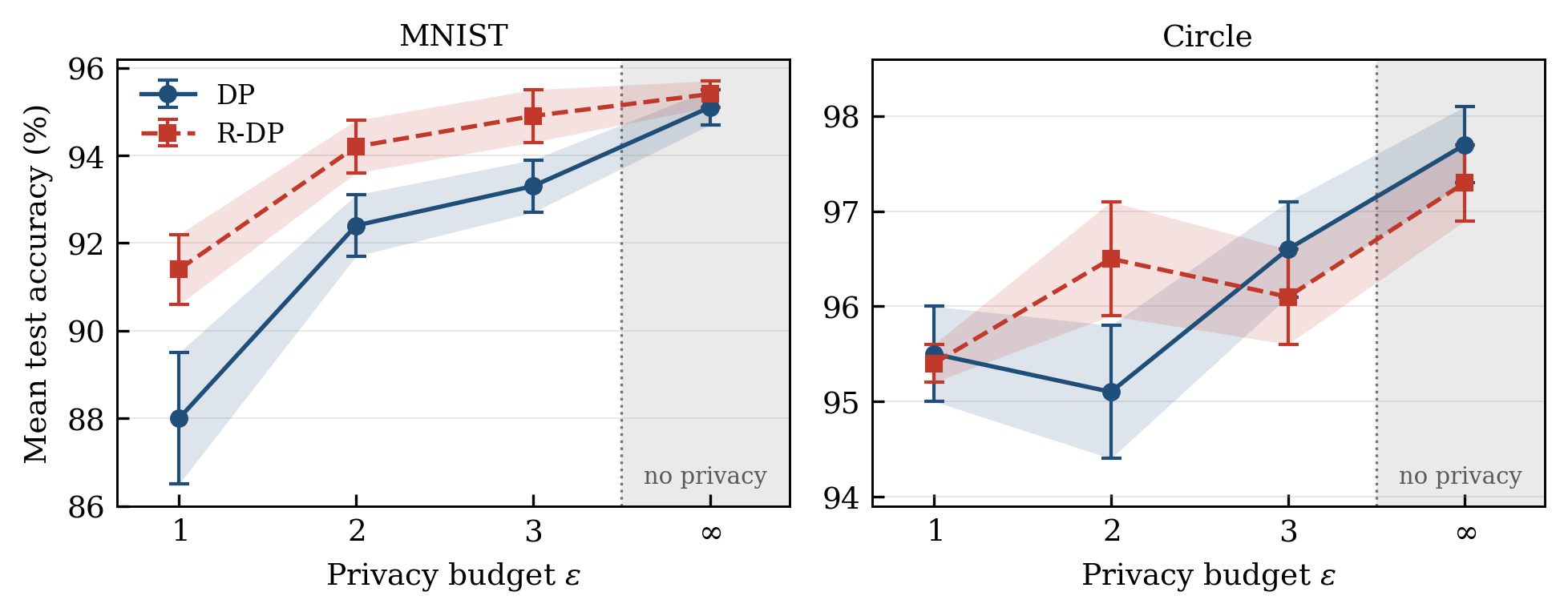}
     \caption{Mean test accuracy as a function of privacy budget $\varepsilon$ for DP and R-DP on MNIST (left) and Circle (right). Markers show the mean over runs and shaded bands show one standard deviation. The shaded region at $\varepsilon=\infty$ is the non-private baseline.}
    \label{fig:acc_error}
\end{figure}

\section{Conclusion and Future Work} 
In this work, we have introduced a privacy-preserving distributed training algorithm that combines gossip consensus with Tuning without Forgetting. We have provided theoretical privacy guarantees for all three agent roles. We have shown that the robust formulation recovers the performance loss due to noise injection.  We have proved geometric convergence of the distributed TwF projection scheme.

Future work will focus on modeling the injected noise in a distributional framework rather than under a worst-case perturbation assumption. We will also investigate how the geometric convergence rate depends on the underlying graph topology.

\bibliography{learning}
\end{document}